\newcommand{\jl}[1]{\comments{\textcolor{red}{[Joel: #1]}}}
\newcommand{\removed}[1]{#1}
\newcommand{\removedJoel}[1]{#1}
\newcommand{\add}[1]{#1}
\newcommand{\alexchange}[1]{#1}
\newcommand{\change}[1]{#1}
\newcommand{\beginsupplement}{%
        \setcounter{table}{0}
        \renewcommand{\thetable}{S\arabic{table}}%
        \setcounter{figure}{0}
        \renewcommand{\thefigure}{S\arabic{figure}}%
        \setcounter{section}{0}
        \renewcommand{\thesection}{S\arabic{section}}%
     }
\begin{document}
\title{Evolvability ES: Scalable and Direct Optimization of Evolvability}

\author{Alexander Gajewski}
\authornote{Work done during an internship at Uber AI Labs}
\affiliation{Columbia University}
\author{Jeff Clune}
\affiliation{Uber AI Labs}
\affiliation{University of Wyoming}
\author{Kenneth O. Stanley}
\affiliation{Uber AI Labs}
\author{Joel Lehman}
\affiliation{Uber AI Labs}
\begin{abstract}
Designing \change{evolutionary} algorithms capable of uncovering highly \emph{evolvable} representations is an
open challenge \removedJoel{in evolutionary computation}; such evolvability is important \removedJoel{in practice,} because it \change{accelerates} evolution and enables fast adaptation to changing circumstances. This paper introduces \emph{evolvability ES}, an evolutionary algorithm designed to explicitly and efficiently optimize for evolvability, i.e.\ the ability to further adapt. The insight is that \removedJoel{it is possible to} derive a novel objective in the spirit of natural evolution strategies that maximizes the diversity of behaviors exhibited when an individual is subject to random mutations, and that efficiently scales with computation. Experiments in 2-D and 3-D locomotion tasks highlight the potential of evolvability ES to generate solutions with tens of thousands of parameters that can quickly be adapted to solve different tasks and that can productively seed further evolution. We further highlight a connection between evolvability \removedJoel{in EC} and a recent and popular gradient-based meta-learning algorithm called MAML; results show that evolvability ES can perform competitively with MAML and that it discovers solutions with distinct properties. The conclusion is that evolvability ES opens up novel research directions for studying and exploiting the potential of evolvable representations for deep neural networks.
\end{abstract}

\begin{CCSXML}
<ccs2012>
<concept>
<concept_id>10010147.10010257.10010293.10011809.10011812</concept_id>
<concept_desc>Computing methodologies~Genetic algorithms</concept_desc>
<concept_significance>500</concept_significance>
</concept>
<concept>
<concept_id>10010147.10010257.10010293.10010294</concept_id>
<concept_desc>Computing methodologies~Neural networks</concept_desc>
<concept_significance>500</concept_significance>
</concept>
</ccs2012>
\end{CCSXML}

\ccsdesc[500]{Computing methodologies~Genetic algorithms}
\ccsdesc[500]{Computing methodologies~Neural networks}

\keywords{Evolvability, neuroevolution, evolution strategy, meta-learning}

\maketitle

\vspace{-0.05in}
\section{Introduction}


One challenge in evolutionary computation (EC) is to design algorithms capable of uncovering highly \emph{evolvable} representations; \change{though evolvability's definition is debated, the idea is to find genomes with great potential for further evolution} \cite{altenberg:evolution,wagner1996perspective,reisinger:empirical,mengistu_2016,lehman2011improving,ebner:neutral,grefenstette:evolvability,kounios2016resolving}. 
\change{Here, as in previous work, we adopt a definition of evolvability as the propensity of an individual to generate phenotypic diversity \cite{mengistu_2016,lehman2011improving,lehman2013evolvability}}.
Such evolvability is important \removedJoel{in practice,} because it \change{\alexchange{broadens the variation} accessible through mutation, thereby accelerating evolution}; improved \change{evolvability} thus would benefit many areas across EC, e.g.\ evolutionary robotics, open-ended evolution, and quality diversity \change{(QD; \cite{pugh:quality,lehman:vc})}.
While evolvability is seemingly ubiquitous in nature \removedJoel{(e.g.\ the amazing diversity of dogs accessible within a few generations of breeding)},
its emergence in evolutionary algorithms (EAs) is seemingly rare \cite{wagner1996perspective,reisinger:empirical}, and how
best to encourage it remains an important open question. 

There are two general approaches to encourage evolvability in EAs. The first is to create environments or
selection criteria that produce evolvability \change{as an \emph{indirect}} consequence \cite{lehman2011improving,ebner:neutral,reisinger:empirical,kashtan2007varying,clune2013evolutionary}. For example, 
environments wherein goals vary modularly over generations may implictly favor individuals better able to adapt to such variations \cite{kashtan2007varying}. \change{The second approach\alexchange{,} which is the focus of this paper\alexchange{,} is to select \emph{directly} for evolvability, \alexchange{i.e.}\ to judge \alexchange{individuals} by directly testing \alexchange{their} potential for further evolution \cite{mengistu_2016}.}
While the first approach is more biologically plausible and is important to understanding
natural evolvability, the second benefits from its directness, its potential ease of application to new domains, and its ability to enable the \emph{study} of highly-evolvable genomes without fully understanding evolvability's natural emergence. 
However, current implementations of such \emph{evolvability search} \cite{mengistu_2016} suffer from their computational cost.

A separate (but complementary) challenge in EC is that of effectively evolving \emph{large} genomes. For example, there has been recent interest in training deep neural networks (DNNs)
because of their potential for expressing complex behaviors, e.g.\ playing Atari games from raw pixels \cite{mnih2015human}. 
However, evolving DNNs is challenging because they have many more parameters
than genomes typically evolved by comparable approaches in EC (e.g.\ neural networks evolved by NEAT \cite{stanley:ec02}). 
For this reason, the study of scalable EAs that can benefit from increased computation is of recent interest \cite{salimans_2017,such2017deep,backtobasics}, and evolvability-seeking algorithms will require similar considerations to scale effectively to large networks.

This paper addresses both of these challenges, by synthesizing three threads of research in deep
learning and EC. The first thread involves a popular gradient-based meta-learning algorithm called MAML \cite{finn2017model} that
searches for points in the search space from which one (or a few) \change{optimization step(s)} can solve diverse tasks. We introduce here a connection between this kind of \removedJoel{parameter-space} meta-learning and evolvability, as MAML's formulation is very similar to \change{that of} 
evolvability search \cite{mengistu_2016}, \change{which} searches for individuals from which \emph{mutations} (instead of optimization) yield a diverse repertoire of behaviors. MAML's formulation, and its success with DNNs on complicated reinforcement learning \change{(RL)} tasks, hints that there may similarly be efficient and effective formulations of evolvability.
The second thread involves the recent scalable form of evolution strategy (ES) of \citet{salimans_2017} (which at heart is a simplified form of natural evolution strategy \cite{wierstra_2011}) shown to be surprisingly competitive with gradient-based \change{RL}. We refer to this specific algorithm as ES in this paper for simplicity, and note that the field of ES as a whole encompasses \change{many diverse} algorithms \cite{schwefel:es,beyer:nc02}.
The final thread is a recent formalism called stochastic computation graphs \change{(SCGs)} \cite{schulman_2015}, which enables automatic derivations of gradient estimations that include expectations over distributions (such as the objective optimized by ES). We here extend \change{SCGs} to handle a larger class of functions, which enables formulating an efficient evolvability-inspired objective.

Weaving together these three threads, the main insight in this paper is that it is possible to derive  a novel algorithm, called \emph{evolvability ES}, that optimizes an evolvability-inspired objective without incurring \emph{any additional overhead} in domain evaluations relative to optimizing a traditional objective with ES. 
Such efficiency is possible because each iteration of ES \change{can aggregate information \emph{across} samples to estimate local gradients of evolvability}.

The experiments in this paper demonstrate the potential
of evolvability ES in two deep RL \removedJoel{locomotion} benchmarks, wherein
evolvability ES \removedJoel{often} uncovers a diversity of high-performing behaviors using the same computation required for ES to uncover a single one. Further tests highlight the potential benefits of such evolvable genomes for fast adaptability, \change{show that evolvability ES can optimize a form of population-level evolvability \cite{wilder2015reconciling}, and demonstrate that evolvability ES performs competitively with the popular MAML gradient-based method (while discovering pockets of the search space with interestingly distinct properties).}
The conclusion is that evolvability ES is a promising new algorithm for producing and studying evolvability at deep-learning scale.



\vspace{-0.2cm}
\section{Background}

\subsection{Evolvability and Evolvability Search}

No consensus exists on the measure or definition of evolvability \cite{pigliucci:evolvability}; the 
definition we adopt here, as in prior work \cite{mengistu_2016,lehman2011improving,lehman2013evolvability}, follows one mainstream conception of evolvability as phenotypic variability \cite{kirschner1998evolvability,brookfield:evolvability,pigliucci:evolvability}, i.e.\ the phenotypic diversity
demonstrated by an individual's offspring. 
\change{Exact definition aside},
reproducing the evolvability of natural organisms is an important yet unmet goal in EC.
Accordingly, researchers have proposed many approaches for encouraging evolvability \cite{lehman2011improving,kashtan2007varying,ebner:neutral,reisinger:empirical,clune2013evolutionary,nguyen:innovation,kounios2016resolving}.
\change{This paper focuses in particular on extending the ideas from evolvability search \cite{mengistu_2016}}, an algorithm that \emph{directly} rewards evolvability \change{\alexchange{by} explicitly measur\alexchange{ing} evolvability and guid\alexchange{ing} search \alexchange{towards} it}.
The motivation is that it often may be more straightforward to directly optimize evolvability than to encourage its indirect emergence, that it may help compare the advantages or drawbacks of different quantifications of evolvability \cite{lehman2018potential}, and that it may facilitate the study of evolvability even before its natural emergence is understood.

In evolvability search, the central idea is to calculate an individual's fitness from domain evaluations of many of its potential offspring. In particular, an individual's potential to generate phenotypic variability is estimated by quantifying the diversity of behaviors demonstrated from evaluating a sample of its offspring. 
\change{Note the distinction between evolvability search and QD: QD attempts to build a collection of diverse well-adapted genomes, while evolvability search attempts to find genomes from which diverse behaviors can readily be evolved.}
In practice, \change{evolvability search} requires (1) quantifying dimensions of behavior of interest, as in the behavior characterizations of novelty search \cite{lehman2011abandoning}, and (2) a distance threshold to formalize what qualifies as two behaviors being distinct. 
Interestingly, optimizing \change{evolvability}, \change{just like optimizing novelty, can sometimes lead to solving problems as a byproduct \cite{mengistu_2016}.}
However, evolvability search is computationally expensive (\change{it requires evaluating enough offspring of each individual in the population to estimate its potential}), and has only been demonstrated \change{with} small neural networks (NNs); evolvability ES address\change{es} both issues.

\vspace{-0.2cm}

\subsection{MAML}

\emph{Meta-learning} \cite{metalearning} focuses on optimizing
an agent's learning potential (i.e.\ its ability to solve new tasks) rather than its immediate performance (i.e.\ how well it solves the current task) as is more typical in optimization and RL\removedJoel{, and  has a rich history both in EC and machine learning at large}. The most common forms of neural meta-learning algorithms train NNs
to implement their own learning algorithms, either through exploiting recurrence \cite{stanley2003evolving,wang2016learning,hochreiter2001learning} or plastic connections \cite{stanley2003evolving,floreano1996evolution,soltoggio2008evolutionary,miconi2018differentiable}. 

However, another approach is taken by the recent and popular model-agnostic meta-learning (MAML) algorithm \cite{finn2017model}. Instead of training a NN that itself can learn from experience, MAML searches for a \emph{fixed} set of NN weights from which a single additional \change{optimization} step can solve a variety of tasks. The insight is that it is possible to differentiate through the optimization step, to create a fully gradient-based algorithm that seeks adaptable areas of the search space.
Interestingly, MAML's formulation shares deep similarities with the idea of evolvability \removedJoel{in EC}: Both involve finding points in the search space nearby to diverse functional behaviors. 
One contribution here is to expose this connection \removedJoel{between MAML and evolvability} and explore it experimentally.

Intriguingly, MAML has been successful in training DNNs with many parameters to quickly adapt in supervised and RL domains \cite{finn2017model}.
These results suggest that evolvable DNNs exist and can sometimes be directly optimized towards, a key source of inspiration for evolvability ES.

\vspace{-0.1cm}
\subsection{Natural Evolution Strategies}

The evolvability ES algorithm introduced here builds upon the ES algorithm of \citet{salimans_2017}, which itself is based on the Natural Evolution Strategies (NES; \cite{wierstra_2011}) black-box optimization algorithm. Because in such a setting the gradients of the function to be optimized, $f(z)$, are unavailable, NES instead creates a smoother version of $f$ that \textit{is} differentiable, by defining a \change{population} distribution $\pi(z; \theta)$ and setting the smoothed loss function $J(\theta) = \mathbb{E}_{z \sim \pi} \left [f(z) \right]$. This function is then optimized iteratively with gradient descent, where the gradients are estimated by samples from $\pi$.

\citet{salimans_2017} showed recently that NES with an isotropic Gaussian distribution of fixed variance (i.e. $\pi = \mathcal{N}(\mu, \sigma I)$ and $\theta = \mu$) is competitive with deep RL on high-dimensional RL tasks, and can be very time-efficient because the expensive gradient estimation is easily parallelizable. \citet{salimans_2017} refer to this algorithm as Evolution Strategies (ES), and we adopt their terminology in this work, referring to it in the experiments as standard ES \removedJoel{(to differentiate it from evolvability ES)}.

Connecting to traditional EAs, the distribution $\pi$  can be viewed as a \emph{population} of individuals that evolves by one \emph{generation} at every gradient step, where $\mu$ can be viewed as the parent individual, and samples from $\pi$ can be termed that parent's \emph{pseudo-offspring} cloud.
\change{NES is formulated differently from other EAs; unlike them it operates by \emph{gradient descent}}. \change{Because} of NES's formulation, the desired outcome (i.e.\ to increase the average fitness $f$) can be represented by a \emph{differentiable function} of \change{trainable} parameters of the population \change{distribution}. We can then analytically compute the optimal update rule for each generation: to take a step in the direction of the gradient. For such algorithms, the burden \change{shifts} from designing a generational update rule to designing an objective function. Thus, to enable fast experimentation with new objective functions, ES-like algorithms can benefit from \emph{automatic differentiation}, an idea explored in the next two sections.

\vspace{-0.2cm}
\subsection{Stochastic Computation Graphs}


In most modern tensor computation frameworks (e.g.\ PyTorch \cite{tensorflow2015} and Tensorflow \cite{paszke2017automatic}), functions are represented as \emph{computation graphs}, with input nodes representing a function's arguments, and inner nodes representing compositions of certain elementary functions like addition or exponentiation.
The significance of computation graphs is that they support automatic differentiation, enabling easy use of gradient descent-like algorithms with novel architectures and loss functions.
Note, however, that these computation graphs are \emph{deterministic}, as these variables have no probability distributions associated with them.
Recently introduced in \citet{schulman_2015}, Stochastic Computation Graphs (SCGs) are a formalism adding support for random variables.
Importantly, like deterministic computation graphs, SCGs support automatic differentiation.

SCGs are thus a natural way to represent ES-like algorithms: The function ES tries to optimize is an expectation over individuals in the population of some function of their parameters. 
In this way, automatic differentiation of SCGs enables rapid experimentation with modifications of ES to different loss functions or population distributions; more details about SCGs can be found in supplemental sections \ref{sec:computation_graphs} and \ref{sec:nested_computation_graphs}. The next
section extends SCGs, which is needed to explore certain formulations of evolvability ES.

\vspace{-0.2cm}
\section{Nested SCGs}
While SCGs are adequate for simple modifications of ES (e.g.\ changing the population distribution from an isotropic Gaussian to \change{a Gaussian with diagonal covariance}), they lack the expressiveness needed to enable certain more complex objectives.
For example, consider an ES-like objective of the form $\mathbb{E}_z \left[ f(\mathbb{E}_{z'}[g(z, z')]) \right]$ (the form of one variant of evolvability ES).
There is no natural way to represent this as a SCG, because it involves a function applied to a \emph{nested} expectation (the key reason being that in general, $\mathbb{E}_z[f(z)] \ne f(\mathbb{E}_z[z])$).

\removedJoel{In light of these limitations,} in supplemental sections \ref{sec:computation_graphs} and \ref{sec:nested_computation_graphs}, we generalize SCGs \cite{schulman_2015} to \removedJoel{more easily} account for nested expectations, and show that this new formalism yields \emph{surrogate loss functions} that can automatically be differentiated by popular tools like PyTorch \cite{paszke2017automatic}.
Through this approach, nearly any loss function involving potentially nested expectations over differentiable probability distributions can be automatically optimized with gradient descent through sampling. This approach is applied in the next section to enable an efficient evolvability-optimizing variant of ES.

\vspace{-0.2cm}
\section{Approach: Evolvability ES}
One motivation for evolvability ES is to create an algorithm similar in effect to evolvability search \cite{mengistu_2016} but without immense computational requirements, such that
it can scale efficiently with computation to large DNNs.
To enable such an algorithm, a driving insight is that the same domain evaluations exploited in standard ES to estimate gradients of \emph{fitness} improvement also contain  the information needed to estimate gradients of \emph{evolvability} improvement. That is, ES estimates fitness improvement by sampling
and evaluating policies drawn from the neighborhood of a central individual, and updates the central individual such that future draws of the higher-fitness policies are more likely. Similarly, the central individual
could be updated to instead make policies that contribute more to the diversity of the entire population-cloud more likely.
The result is an algorithm that does not incur the cost of evolvability search of evaluating potential offspring of all individuals, because
in an algorithm like ES that represents the population as a formal distribution over the search space, information about local diversity is shared \emph{between individuals}; a further
benefit is the computational efficiency and scalability of ES itself, which this algorithm retains.

While such an insight
sounds similar to what motivates novelty-driven ES algorithms \cite{conti2018neurips,cuccu2011novelty}, there is a subtle but important difference: Novelty-driven ES drives 
the central individual of ES towards behaviors unseen in previous generations, while evolvability ES instead aims to optimize the diversity of offspring of the current central individual. Thus the product of evolvability ES is an evolvable central individual whose mutations lead to diverse behaviors (note that an experiment in section \ref{sec:mixtures} extends evolvability ES to include multiple central individuals).
We now formally describe the method, and two concrete variants\footnote{\change{Source code available at: \url{https://github.com/uber-research/Evolvability-ES}}}.


\subsection{Formal Description}

\setlength{\abovedisplayskip}{3pt}
\setlength{\belowdisplayskip}{3pt}

Consider the isotropic Gaussian distribution $\pi$ of ES. Recall the analogy wherein this distribution represents the population, and the distribution mean can be seen
as the central individual\removedJoel{, or parent}.
By the definition adopted here, \emph{evolvable} points in the search space can through mutation produce many different behaviors: In other words, mutations provide many different options for natural selection to select from.
Thus, as in evolvability search \cite{mengistu_2016}, our aim is to maximize some statistic of behavioral diversity of an individual's mutations.
Formally, behavior is represented as a behavior characteristic (BC; \cite{lehman2011abandoning}), a vector-valued function mapping a genome $z$ to behaviors $\textbf{B}(z)$.
For example, in a locomotion task, a policy's behavior can be its final position on a plane.

Here, we consider two different diversity statistics which lead to maximum variance (MaxVar) and maximum entropy (MaxEnt) variants of evolvability ES, here called MaxVar-EES and MaxEnt-EES. MaxVar-EES maximizes the trace of the covariance matrix of the BC over the population.
The motivation is that the variance of a distribution captures one facet of the distribution's diversity. This formulation results in the following loss function:
\begin{equation} \label{eq:max_var}
    J(\theta) = \sum_j \mathbb{E}_z \left [(B_j(z) - \mu_j)^2 \right ],
\end{equation} 
where the expectation is over policies $z \sim \pi(\cdot ; \theta)$, the summation is over components $j$ of the BC, and $\mu_j$ represents the mean of the $j$th component of the BC.

MaxEnt-EES maximizes a distribution's entropy rather than its variance. We expect \removedJoel{in general} this method to behave more similarly to evolvability search than MaxVar-EES, because \removedJoel{over a fixed, bounded region of support in behavior space,} the maximum-entropy distribution of behavior is a uniform distribution, which also would maximize evolvability search's fitness criterion, i.e.\ to maximize the amount of distinct behaviors displayed by offspring.
To estimate entropy, we first compute a kernel-density estimate of the distribution of behavior for a chosen kernel function $\varphi$, giving
\begin{equation}
    p(\bm{B}(z); \theta) \approx \mathbb{E}_{z'}[\varphi(\bm{B}(z')-z)],
\end{equation}
which can be applied to derive a loss function which estimates the entropy:
\begin{equation} \label{eq:ent_loss}
    J(\theta) = -\mathbb{E}_z \left [ \log \mathbb{E}_{z'}[\varphi(\bm{B}(z')-z)] \right ].
\end{equation}
In practice, these losses are differentiated with PyTorch (MaxEnt-EES in particular depends on the nested SCGs described earlier) and both the loss and their gradients are then estimated from samples \change{(recall that these samples all come from the current population distribution, and information from all samples is integrated to estimate the direction of improving evolvability \alexchange{for the population as a whole})}. The gradient estimates are then applied to update the population distribution, enabling a generation of evolution. 


\vspace{-0.2cm}
\subsection{Proof of Feasibility}

It may at first seem unreasonable to expect that Evolvability ES could even in theory optimize the objective set out for it, i.e.\ to uncover a particular region of DNN weight-space such that mutations can produce the arbitrarily different mappings from input to output necessary to demonstrate diverse behaviors. In addition to the empirical results in this paper, we also prove by construction that such parts of the search space do exist.

\alexchange{Specifically}, we show that given \emph{any} two continuous policies, it is possible to construct a five-layer network from which random mutations results in ``flipping'' between the two initial policies.
This construction is \alexchange{described} in supplemental section \ref{sec:theory}.

\section{Experiments}

\subsection{Interference Pattern Task} \label{sec:interference_task}

To help validate and aid understanding of \removedJoel{the} evolvability ES \removedJoel{approach}, 
consider the interference pattern in figure \ref{fig:interference_plot}a, generated as the product of a high-frequency and a low-frequency sine wave.
In this figure, the horizontal axis represents a single-dimensional parameter space, and the vertical axis represents the single-dimensional BC.
\change{In} this task, evolvability is maximized at the maximum of the slower sine wave, because this is the point at which the amplitude of the faster sine wave is greatest, meaning that mutation will induce the widest distribution of vertical change.
Figure \ref{fig:interference_plot} shows that both variants of evolvability ES approached the optimal value in parameter space as training progressed.
The next sections demonstrate that the algorithm also successfully scales to more complex domains.

\begin{figure*}
\centering
\includegraphics[width=\linewidth]{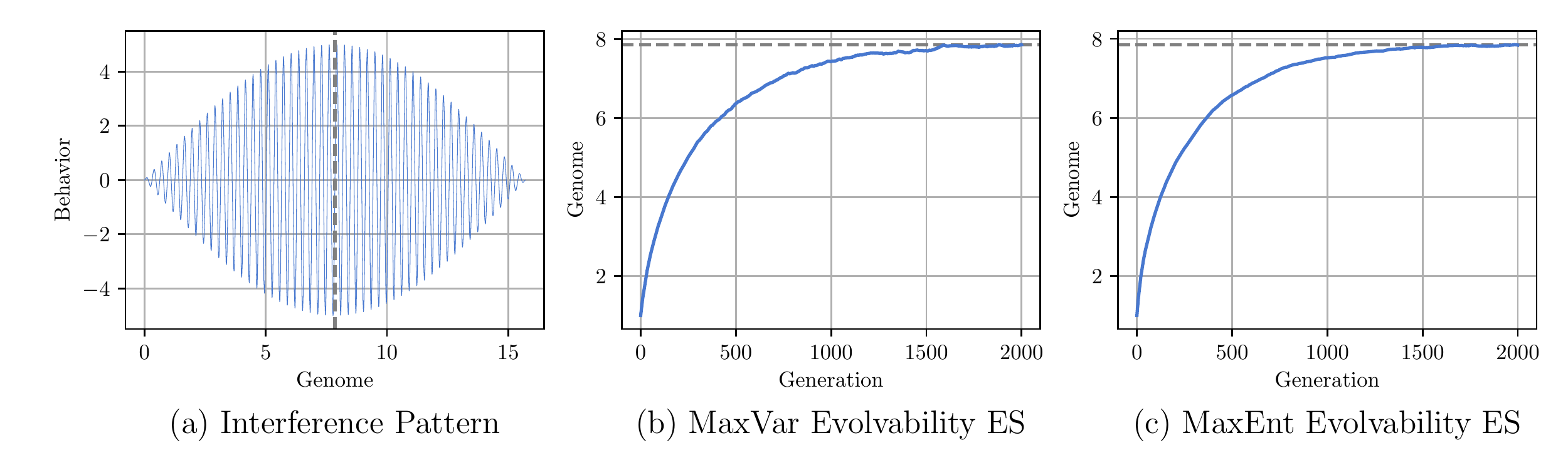}
\vspace{-0.35in}
\caption{Interference Pattern Results. In the interference pattern task, a genome consisted of a single floating point parameter ($x$), and the resulting behavior was generated as a function of the genome parameter by: $f(x) = 5\sin(x/5) \sin(20x)$. (a) Shows the plot of behavior (vertical axis) as a function of genome (horizontal axis). The training plots shown in (b) and (c) validate both evolvablity ES variants, showing that they converged to the point with behavior that is most sensitive to perturbations, shown as a dashed line in all three plots.}
\label{fig:interference_plot}
\end{figure*}

\vspace{-0.2cm}
\subsection{2-D Locomotion Task}

We next applied evolvability ES to a physically-simulated robotics domain. 
In this 2-D locomotion task, a NN policy with $74,246$ parameters controlled the ``Half-Cheetah'' planar robot from the PyBullet simulator \cite{coumans2016pybullet}; this simulation and NN architecture \change{are} modeled after the Half-Cheetah deep RL benchmark of \citet{finn2017model}. The NN policy received as input 26 variables consisting of the robot's position, velocity, and the angle and angular velocity of its joints; the policy controlled the robots through NN outputs that are interpreted as torques applied to each of the robot's 6 joints. In this domain, we characterized behavior as the final horizontal offset of the robot after a fixed number of simulation steps. All population sizes were $10,000$; other hyperparameters and algorithmic details can be found in supplemental section \ref{sec:experimental_details}.

Figure \ref{fig:cheetah_hist} shows the distribution of behaviors in the population over training time for standard ES as well as both evolvability ES variants.
First, these results show that, perhaps surprisingly, the results from the interference pattern task generalize to domains with complex dynamics and large NNs.
In particular, evolvability ES discovered policies such that small mutations resulted in diametrically opposed behaviors, i.e.\ approximately half of all mutations caused the robot to move left, while the others resulted in it moving right. 
\change{Videos of evaluations of these policies (available at \url{http://t.uber.com/evolvabilityes})} show the striking difference between the rightward and leftward behaviors, highlighting the non-triviality in ``fitting'' both of these policies into the same NN.
Interestingly, both evolvability ES variants produced similar distributions of behavior, even though MaxEnt-EES's objective is explicitly intended to incentivize more uniform distributions of behavior.

\begin{figure*}
\vspace{-0.17in}
\centering
\includegraphics[width=\linewidth]{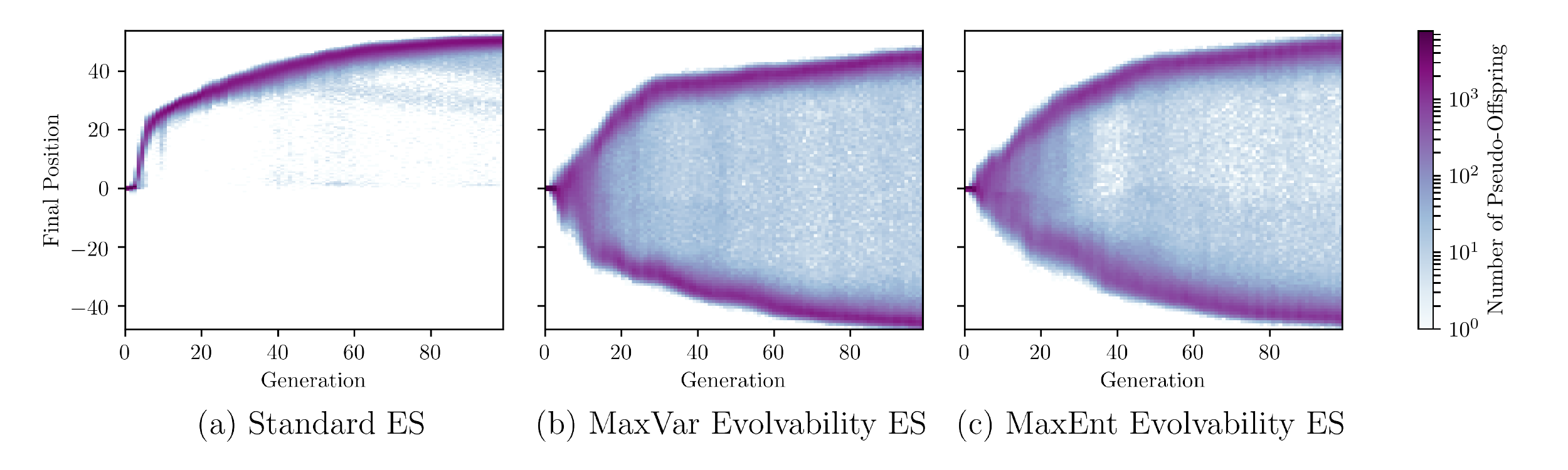}
\vspace{-0.35in}
\caption{\textbf{Distribution of behaviors across evolution in the 2-D locomotion domain.} Heat-maps of the final horizontal positions of 10,000 policies sampled from the population distribution are shown for each generation over training time for representative runs of each method. These plots suggest that both evolvability ES methods discovered policies that travel nearly as far both backwards and forwards as standard ES travels forward alone \change{(note that standard ES is rewarded for traveling forward)}.}
\label{fig:cheetah_hist}
\end{figure*}

Supplemental figure \ref{fig:cheetah_cmp} shows raw locomotion ability over training in this domain, i.e.\ the mean distance from the origin randomly sampled policies travel. Note that this is the metric optimized by standard ES (and does not reflect evolvability), and standard ES policies indeed moved further on average than both MaxVar- and MaxEnt-EES ($p < 0.05$, 12~runs; \emph{all tests unless otherwise specified are Mann-Whitney U Tests}).
However, on average, the MaxVar- and MaxEnt-EES variants yielded policies which moved $92.1\%$ and $93.9\%$ as far on average as those found by standard ES, so the performance price for evolvability is minimal.
There was no significant difference between MaxVar- and MaxEnt-EES ($p > 0.1$, 12~runs).

\vspace{-0.1cm}
\subsubsection{Comparison to MAML}

Evolvability ES takes inspiration from \change{MAML's ability to} successfully find policies near in parameter space to \change{diverse} behaviors \cite{finn2017model} and \change{to (like evolvability-inspired methods) adapt quickly to new circumstances}. Two questions naturally arise: (1) how does evolvability ES compare to MAML in enabling fast adaptation, and (2) is MAML itself drawn towards evolvable parts of the search space, i.e.\ are mutations of MAML solutions similar to those of evolvability ES solutions? 

To address these questions, MAML was trained in the 2-D locomotion task. Recall that MAML learns a policy which can readily adapt to a new task through further training. MAML, unlike evolvability ES, requires a distribution of training tasks \removedJoel{to adapt to} rather than a BC that recognizes distinct behaviors.
\change{MAML's task distribution consisted of two tasks of equal probability, i.e.\ to walk left or right.}
MAML additionally makes use of per-timestep rewards (rather than per-evaluation information as in EC), given here as the distance travelled that timestep in the desired direction. All such details were similar to \change{MAML's canonical application to this domain \cite{finn2017model}}; see supplemental section \ref{sec:experimental_details} for more information.

To explore how evolvability ES compares to MAML in enabling fast adaptation, we iteratively chose tasks, here directions to walk in, and allowed each algorithm to adapt to each given task.
For evolvability ES, when given a task to adapt to, the central individual was mutated 40 times, and the mutation performing best was selected (and its reported performance calculated from 10 separate evaluations with different random seeds).
For MAML, in an RL environment like the 2-D locomotion task, first the unmodified learned policy was evaluated in a new task (many times in parallel to reduce noise in the gradient), and then a policy-gradient update modified the MAML solution to solve the new task.
During such test-time adaptation, MaxEnt-EES performed statistically significantly better than MAML ($p < 0.005$, 12 runs), walking further on average in the desired direction after mutation and selection than MAML after adaptation. There was no significant difference between MaxVar-EES and MAML ($p > 0.1$, 12 runs), and MaxEnt-EES was significantly better than MaxVar-EES ($p < 0.05$, 12 runs). 
The conclusion is that evolvability ES in this domain is competitive with the leading gradient-based approach for meta-learning.

To compare what areas of the search space are discovered by MAML and evolvability ES, we explored what
distribution of behaviors resulted from applying random mutations to the solutions produced by MAML.
First, we subjected MAML policies to mutations with the same distribution as that used by evolvability ES, and then recorded the final positions of each of these offspring policies.
Figure \ref{fig:maml_hist} shows the resulting distributions of behaviors for representative runs of MAML and both EES-variants. Qualitatively, the MAML policies are clumped closer to the origin (i.e.\ they typically move less far), and have less overall spread (i.e.\ their variance is lower).
Both of these observations are statistically significant over 12 runs of each algorithm: The mean final distance from the origin for both evolvability ES variants was higher than for MAML ($p < 0.00005$ for both), and the variance of the distribution of final distances from the origin was higher for both evolvability ES variants than for MAML ($p < 0.00005$ for both). We also tested smaller perturbations of the MAML policy, under the hypothesis that gradient steps may be smaller than evolvability ES mutations, but found similar results; see supplemental figure \ref{fig:maml_hist_small}.
These two results together suggest that evolvability ES does indeed find more evolvable representations than MAML.

\begin{figure*}
\centering
\includegraphics[width=\linewidth]{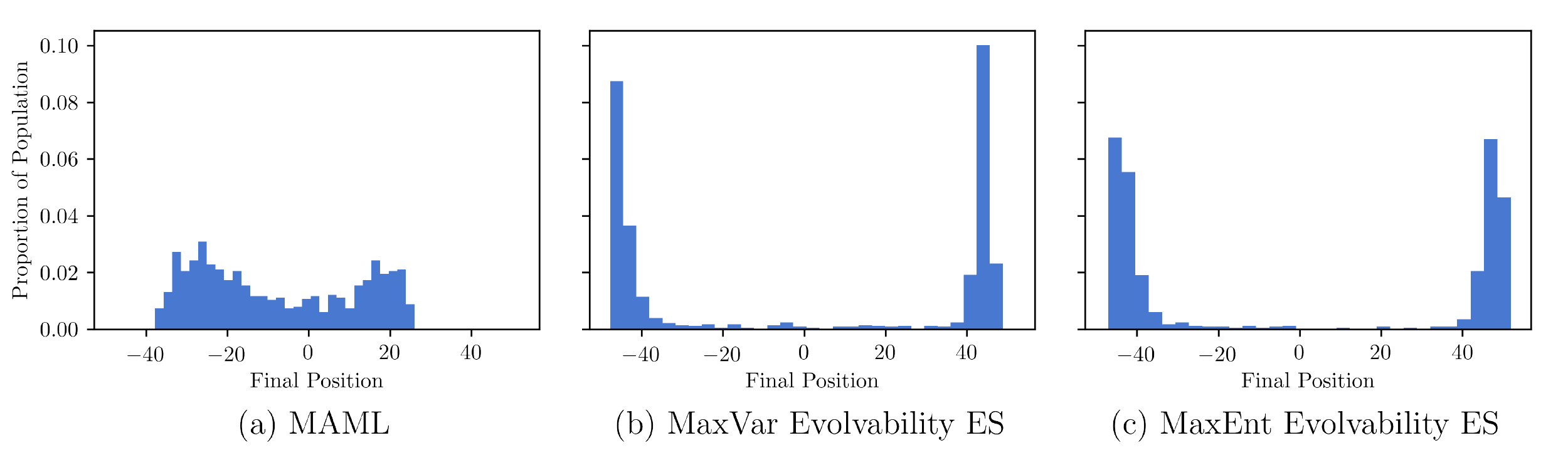}
\vspace{-0.35in}
\caption{\textbf{Distribution of behaviors compared to MAML in the 2-D locomotion domain.} Histograms of the final $x$ positions of 1,000 policies sampled from the final population distribution are shown each variant of evolvability ES, as well as for perturbations of MAML solutions of the same size as those of evolvability ES. These plots suggest that both evolvability ES methods discovered more evolvable regions of parameter space than did MAML.}
\label{fig:maml_hist}
\end{figure*}

\vspace{-0.2cm}
\subsection{3-D Locomotion Task}

The final challenge domain applied evolvability ES to a more complex 3-D locomotion task that controls a 
four-legged robot; here evolvability ES aimed to produce a genome with offspring that can locomote in any direction. In particular, in the 3-D locomotion task, a NN policy with $75,272$ parameters controlled the ``Ant'' robot from the PyBullet simulator \cite{coumans2016pybullet}, modeled after the domain in the MuJoCo simulator \cite{mujoco} that serves as a common benchmark in deep RL \cite{finn2017model}. The $28$ variables the policy received as inputs consisted of the robot's position and velocity, and the angle and angular velocity of its joints; the NN output torques for each of the ant's $8$ joints.
In this domain, a policy's BC was its final $(x,y)$ position.

Figure \ref{fig:ant_hist} shows the population of behaviors after training \change{(note that standard ES is rewarded for traveling forward)}. Interestingly,
both evolvability ES variants formed a ring of policies, i.e.\ they found parameter vectors from which nearly \emph{all directions} of travel
were reachable through mutations.
For evolvability ES there were few behaviors in the interior of the ring (i.e.\ few mutations were degenerate), whereas standard ES exhibited a trail of reduced performance (i.e.\ its policy was less robust to mutation). Supporting this observation, for standard ES more policies ended their evaluation less than $5$ units from the origin than did those sampled from either variant of evolvability ES ($p < 0.01$, 12~runs).

\begin{figure*}
\centering
\vspace{-0.17in}
\includegraphics[width=\linewidth]{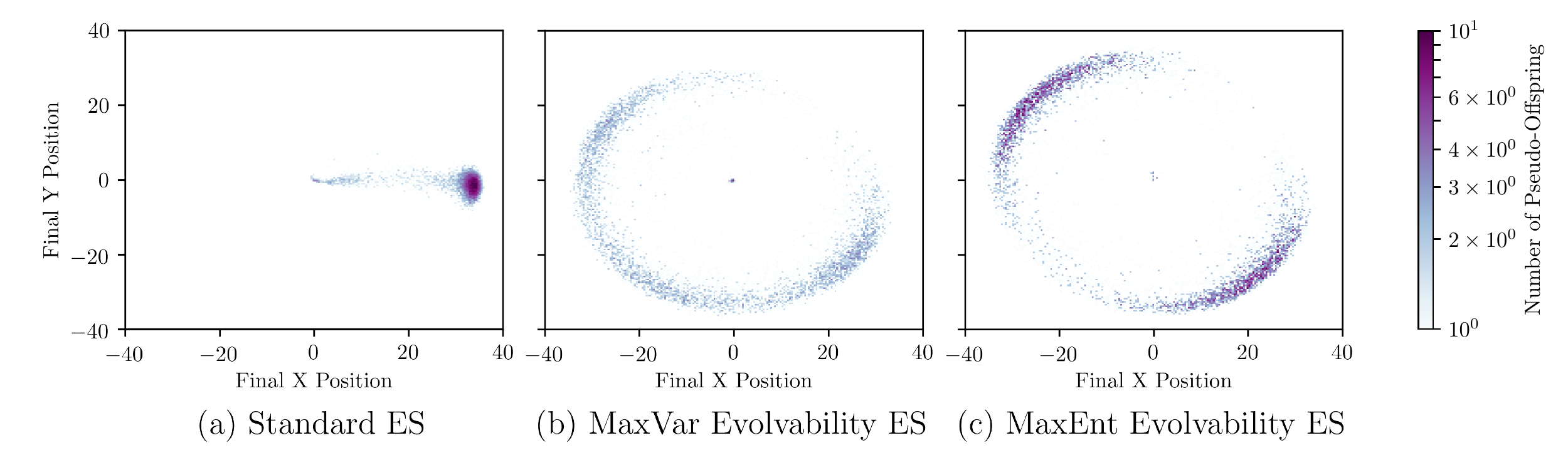}
\vspace{-0.35in}
\caption{\textbf{Distribution of behaviors in the final population in the 3-D locomotion domain.} Shown are heat-maps \change{(taken from representative runs)} of the final positions of 10,000 policies sampled from the population distribution at generation 100 for each of (a) standard ES, (b) MaxVar-EES, and (c) MaxEnt-EES. These plots suggest that both evolvability ES variants successfully found policies which moved in many different directions, and roughly as far as standard ES traveled in the positive $x$ direction alone.}
\label{fig:ant_hist}
\end{figure*}

Supplemental figure \ref{fig:ant_cmp} shows locomotion ability across training.
As in the 2-D locomotion domain, standard ES policies moved further on average than MaxVar- and MaxEnt-EES ($p < 0.05$, 12 runs).
However, the MaxVar- and MaxEnt-EES variants yielded policies which moved on average $85.4\%$ and $83.2\%$ as far as those from standard ES, which strikes a reasonable trade-off given the diversity of behaviors uncovered by evolvability ES relative to the single policy of standard ES.
There was no significant difference in performance between MaxVar- and MaxEnt-EES ($p > 0.1$, 12 runs).

\subsubsection{Seeding Further Evolution}

While previous experiments demonstrate that evolvability ES enables adaptation to new tasks
without further evolution, a central motivation for evolvability is to accelerate evolution in
general. 
As an initial investigation of evolvability ES's ability to seed further evolution, we used trained populations from evolvability ES as initializations for standard ES, and rewarded the agent for walking as far as possible in a particular direction, here, for simplicity, the positive $x$ direction.

Figure \ref{fig:adapt_ent_hist} shows a heat-map of behavior characteristics for the pre-trained populations produced by MaxEnt-EES, as well as \change{heat-maps} following up to twenty standard ES updates (for a similar MaxVar-EES plot see supplemental figure \ref{fig:adapt_var_hist}).
The population successfully converged in accordance with the new selection pressure, and importantly, it adapted much more quickly than randomly initialized population, as shown in figure \ref{fig:adapt_cmp}.
Indeed, further standard ES training of both variants of evolvability ES resulted in policies which moved statistically significantly further in the positive $x$ direction than a random initialization after $5$, $10$, and $20$ generations of adaptation ($p < 0.0005$, 12 runs). There was no significant difference between MaxVar- and MaxEnt-EES ($p > 0.1$, 12 runs).

\begin{figure}
\vspace{-0.2in}
  \centering
  \includegraphics[width=0.8\linewidth]{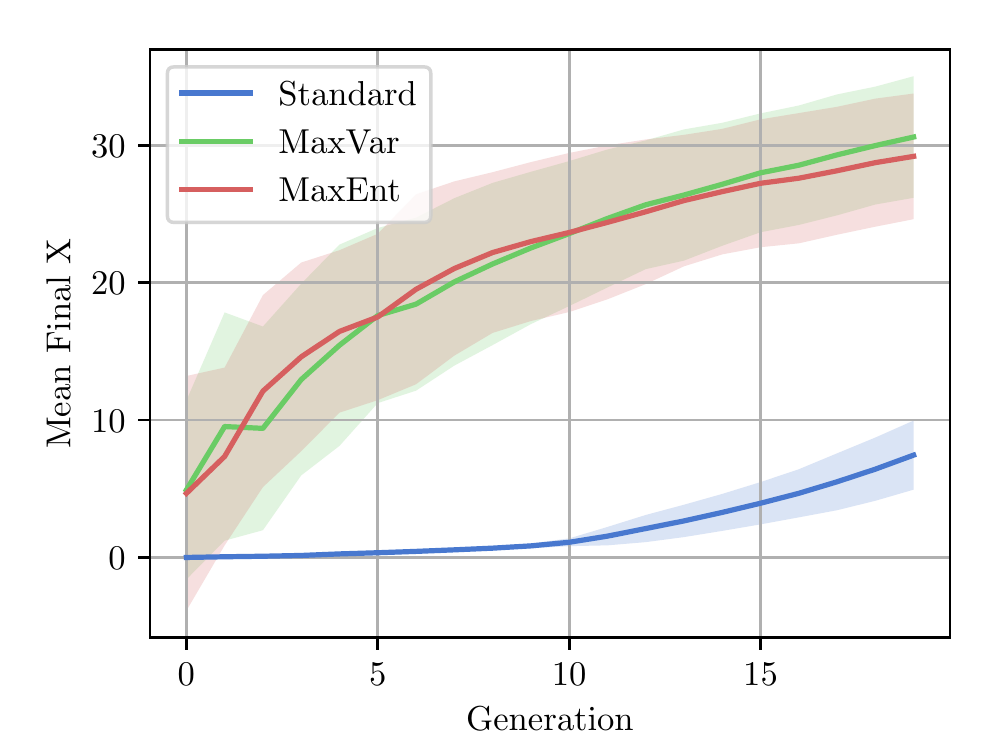}
  \vspace{-0.15in}
\caption{\textbf{Adaptation performance.} The plot compares the performance of standard ES adapting to a new behavior when seeded with a random initialization, or initializations from completed runs of MaxVar- and MaxEnt-EES. Mean and standard deviation over 12 runs shown. Populations trained by both methods of Evolvability ES evolved more quickly than randomly initialized populations.}
\label{fig:adapt_cmp}
\end{figure}

\begin{figure*}
\centering
\vspace{-0.17in}
\includegraphics[width=\linewidth]{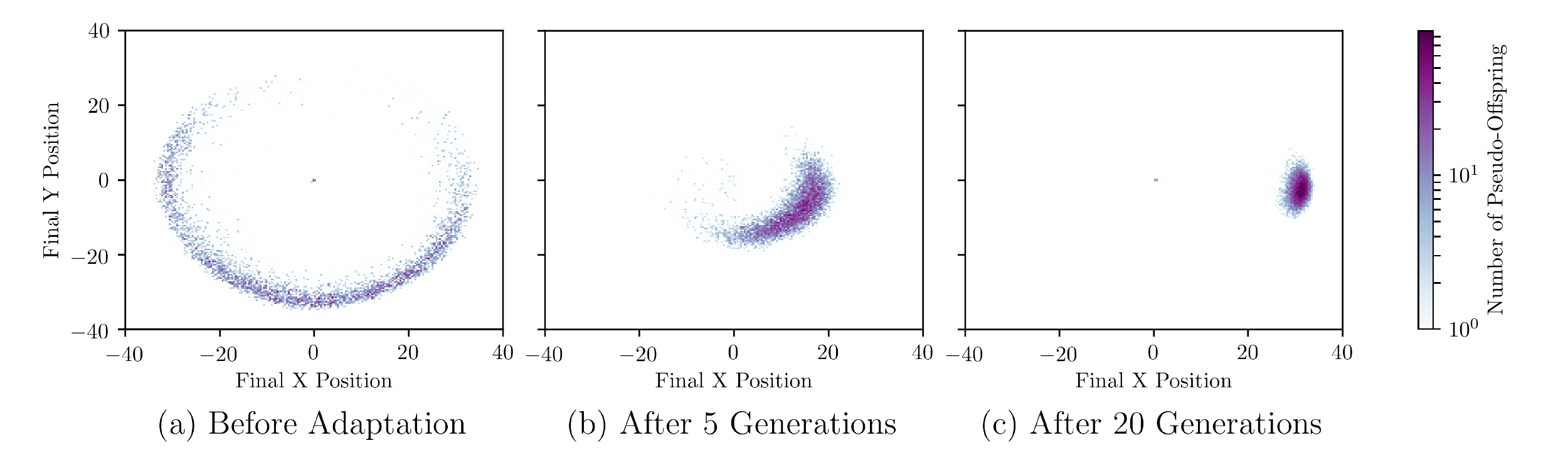}
\vspace{-0.35in}
\caption{\textbf{Distribution of behaviors during adaptation in the 3-D locomotion domain.} Heat-maps are shown of the final positions of 10,000 policies sampled from the population distribution initialized with MaxEnt-EES, and adapted to move in the positive $x$ direction with standard ES over several generations. These plots suggest that MaxEnt-EES successfully found policies that could quickly adapt to perform new tasks. See supplemental figure \ref{fig:adapt_var_hist} for the MaxVar version.}
\label{fig:adapt_ent_hist}
\end{figure*}

\subsection{Optimizing Population-Level Evolvability} \label{sec:mixtures}

\change{Finally, we explore (in the 3-D locomotion domain)} the potential to evolve what \citet{wilder2015reconciling} term \emph{population-level evolvability}, i.e.\ to consider evolvability as the sum of behaviors reachable from possible mutations of all individuals within a \emph{population}; this is contrast to 
the individual evolvability targeted by evolvability search and evolvability ES as described so far (both reward behavioral diversity accessible from a single central individual). E.g.\ in nature different organisms are primed to evolve in different directions, and one would not expect a dandelion to be able to quickly adapt into a dinosaur, though both may be individually highly-evolvable in certain phenotypic dimensions.



Indeed, for some applications, allowing for many separately-evolvable individuals might be more effective than optimizing for a single evolvable individual, e.g.\ some environments may entail too many divergent phenotypes for the neighborhood around a single central individual to encompass them all. Interestingly, the idea of joint optimization of many individuals for maximizing evolvability is naturally compatible with evolvability ES. The main insight is recognizing that the population distribution need not be unimodal; 
as an initial investigation of this idea, we experimented with a population distribution that is the mixture of two unimodal distributions, each representing a different species of individuals.




As a proof of concept, we compared two multi-modal variants of evolvability ES, where the different modes of a Gaussian Mixture Model (GMM; \cite{geoffrey:finite}) distribution can learn to specialize.
In the first variant (vanilla GMM), we equipped Evolvability ES with a multi-modal GMM population, where each mode was equally likely and was separately randomly initialized.
In the second variant (splitting GMM), we first trained a uni-modal population (as before) with evolvability ES until convergence, then seeded a GMM from it, by initializing the new component means from independent samples from the pre-trained uni-modal population (to break symmetry).
\change{The second variant models speciation, wherein a single species splits into subspecies that can further specialize.}
A priori, it is unclear whether the different modes will specialize into different behaviors, because the only optimization pressure is for the \emph{union} of both modes to cover a large region of behavior space. However, one might expect that more total behaviors be covered if the species \emph{do} specialize.

Figure \ref{fig:fork_ent_hist} and supplemental figure \ref{fig:fork_var_hist} show the qualitative performance of these two variants.
\change{These figures first show that specialization \emph{did} emerge in both variants, without direct pressure to do so, although
there was no significant difference in how the speciated version covered the space relative to individual-based evolability ES ($p > 0.1$, 6 runs). This result shows that intriguingly it is possible to directly optimize population-level evolvability, but its benefit may come only from domains with richer possibilities for specialization (e.g.\ an environment in which it is possible to either swim or fly), a direction to be explored in future work.}



\begin{figure*}
\centering
\vspace{-0.17in}
\includegraphics[width=\linewidth]{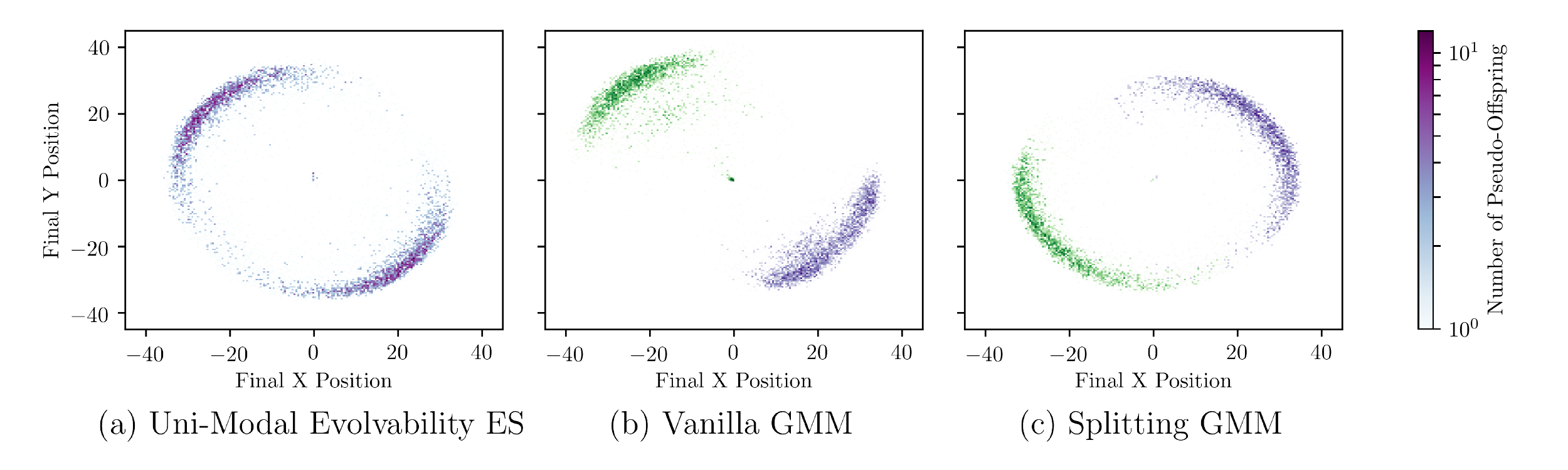}
\vspace{-0.35in}
\caption{\textbf{Distribution of behaviors for uni- and multi-modal MaxEnt-EES variants}. Heat-maps are shown of the final positions of 10,000 policies sampled from the population at generation 100 of MaxEnt-EES, with (a) one component and (b) a vanilla GMM with two components. Also shown is the result of (c) splitting the trained single component into two components, and evolving for 20 additional generations. These plots suggest \alexchange{that both bi-modal variants performed similarly}. See supplemental figure \ref{fig:fork_var_hist} for the MaxVar version.}
\label{fig:fork_ent_hist}
\end{figure*}


\vspace{-0.1in}

\section{Discussion and Conclusion}



This paper's results highlight that, surprisingly, it is possible to directly, efficiently, and successfully optimize evolvability in the space of large neural networks. 
One key insight enabling such efficient optimization of evolvability is that there exist deep connections between evolvability and parameter-space meta-learning algorithms like MAML, a gradient-based approach to meta-learning popular in traditional ML.
These results suggest that not only may MAML-like algorithms be used to uncover evolvable policies, evolutionary algorithms that in general incline towards evolvability may 
serve as a productive basis for meta-learning.
That is, evolutionary meta-learning has largely focused on
plastic neural networks \cite{stanley2003evolving,floreano1996evolution,soltoggio2008evolutionary}, but evolvable genomes that are primed to quickly adapt are themselves a form of meta-learning (as MAML and evolvability ES demonstrate); such evolvability could be considered instead of, or complementary to, NNs that adapt online.

The results of this paper also open up many future research directions. One natural follow-up concerns the use of evolvability as an auxiliary objective to complement novelty-driven ES \cite{conti2018neurips,cuccu2011novelty} or objective-driven ES. The intuition is that increased evolvability will catalyze the accumulation of novelty or progress towards a goal.
Interestingly, the same samples used for calculating novelty-driven or objective-driven ES updates can be \emph{reused} to estimate the gradient of evolvability; in other words, such an auxiliary objective could be calculated with no additional domain evaluations. Additionally, the population-level formulation of evolvability ES is itself similar to novelty-driven ES, and future work could compare them directly.

Interestingly, while part of Evolvability ES's origin comes from inspiration from gradient-based ML (i.e.\ MAML), it also offers the opportunity to inspire new gradient-based algorithms: A reformulation of the evolvability ES loss function could enable a policy gradients version of evolvability ES which exploits the differentiability of the policy to decrease the variance of gradient estimates (supplemental section \ref{sec:pg_ees} further discusses this possibility).
The conclusion is that Evolvability ES is a new and scalable addition to the set of tools for exploring, encouraging, and studying evolvability, one we hope will continue to foster cross-pollination between the EC and deep RL communities.

\clearpage
\bibliographystyle{plainnat}
\bibliography{main}

\clearpage
\beginsupplement

\section*{Supplemental Information}

Included in the supplemental information are experimental details and
hyperparameters for all algorithms (section \ref{sec:experimental_details}); a description of stochastic computation graphs and how we extended them (sections \ref{sec:computation_graphs} and \ref{sec:nested_computation_graphs}); 
and the particular stochastic computation graphs that enable calculating loss and gradients for ES and Evolvability ES (section \ref{sec:evo_graphs}).

\section{Experimental Details} \label{sec:experimental_details}

This section presents additional plots useful for better understanding the performance of
evolvability ES, as well as further details and hyperparameters for all algorithms.

\subsection{Additional Plots}

Figures \ref{fig:cheetah_cmp} and \ref{fig:ant_cmp} display the mean distance from the origin of both standard ES and both variants of evolvability ES, on the 2D and 3D locomotion tasks respectively.

Figure \ref{fig:adapt_var_hist} highlights how the distribution of behaviors changes during meta-learning test-time to quickly adapt to the task at hand for MaxVar-EES (see figure \ref{fig:adapt_ent_hist} for the MaxEnt version).

Figure \ref{fig:fork_var_hist} contrasts the two variants of multi-modal MaxVar-ES (see figure \ref{fig:fork_ent_hist} for the MaxEnt version).

Figure \ref{fig:maml_hist_small} compares perturbations of central evolvability ES policies to perturbations of MAML policies with a standard deviation 4 times smaller than that of EES.

\begin{figure}
  \centering
  \includegraphics[width=.9\linewidth]{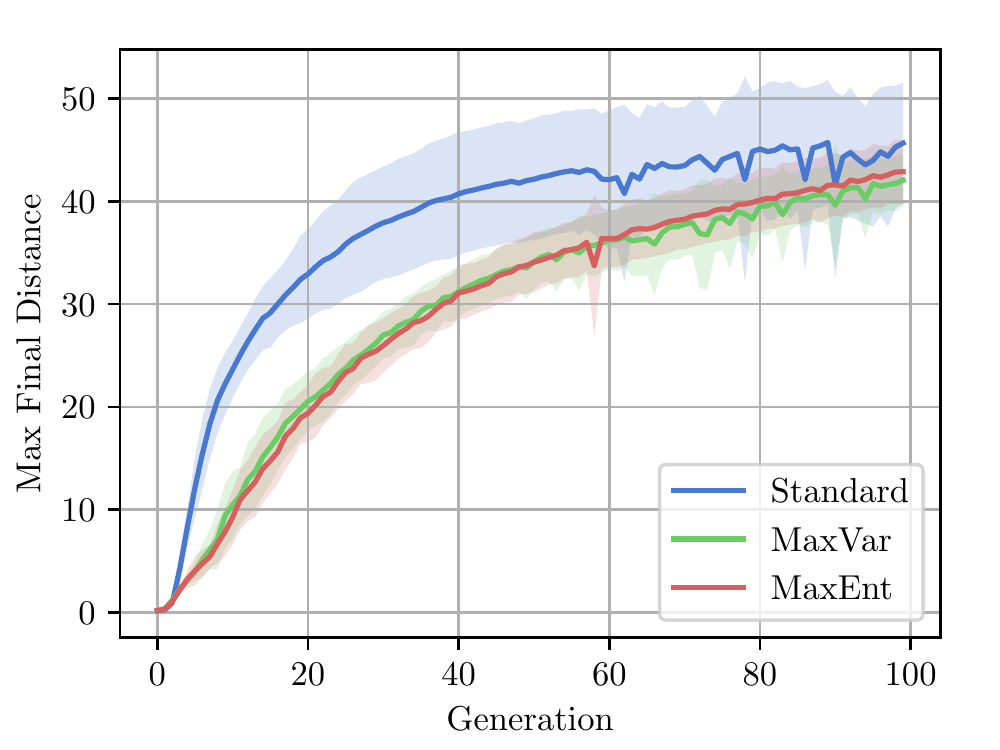}
  \vspace{-0.2in}
\caption{\textbf{2-D locomotion ability across training.} This plot compares raw locomotion ability of policies sampled from standard ES, MaxVar-EES, and MaxEnt-EES during training. Each curve plots the mean final distance from the origin over 10,000 samples from the population distribution. The error bars indicate standard deviation over the 12 training runs. While standard ES learned slightly faster, this plot shows that both evolvability ES variants found policies which moved almost as far as standard ES in this domain, despite encoding both forwards and backwards-moving policies.}
\label{fig:cheetah_cmp}
\end{figure}

\begin{figure}
  \centering
  \includegraphics[width=.9\linewidth]{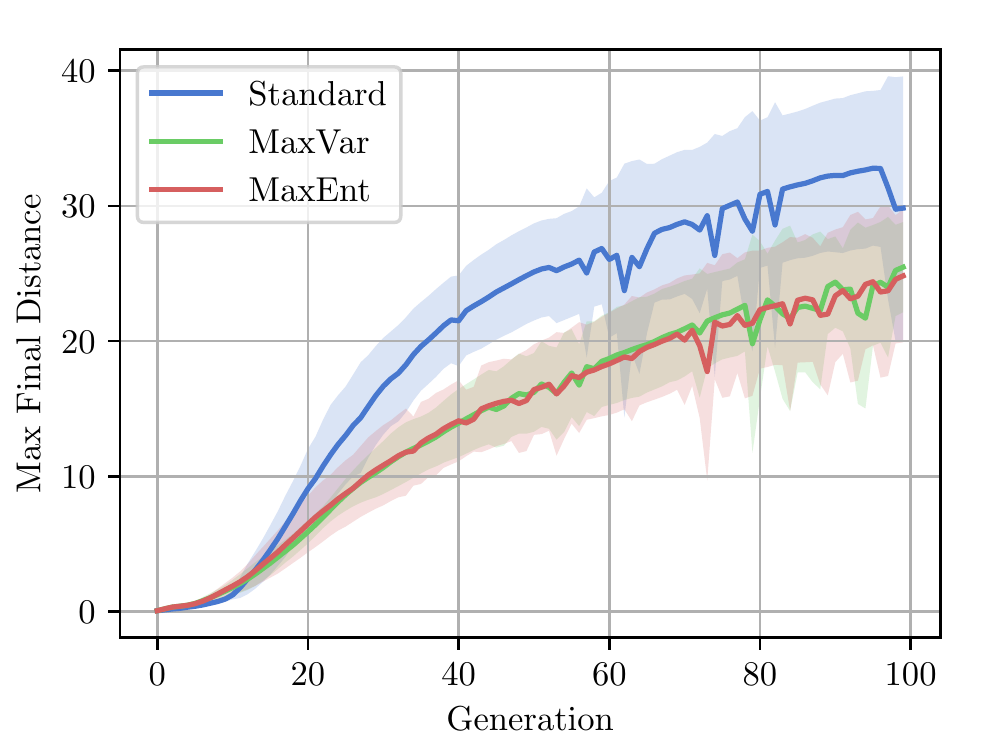}
  \vspace{-0.15in}
\caption{\textbf{3-D locomotion ability across training.} This plot compares raw locomotion ability of policies samples from standard ES, MaxVar-EES, and MaxEnt-EES. Each curve shows the mean final distance from the origin over 10,000 samples from the population distribution during training. Mean and standard deviation over 12 runs shown. While standard ES learned its single forward-moving policy more quickly, this plot highlights that both evolvability ES variants found policies which moved almost as far as standard ES on the ant domain, despite encoding policies that move in many more directions.}
\label{fig:ant_cmp}
\end{figure}

\begin{figure*}
\centering
\includegraphics[width=\linewidth]{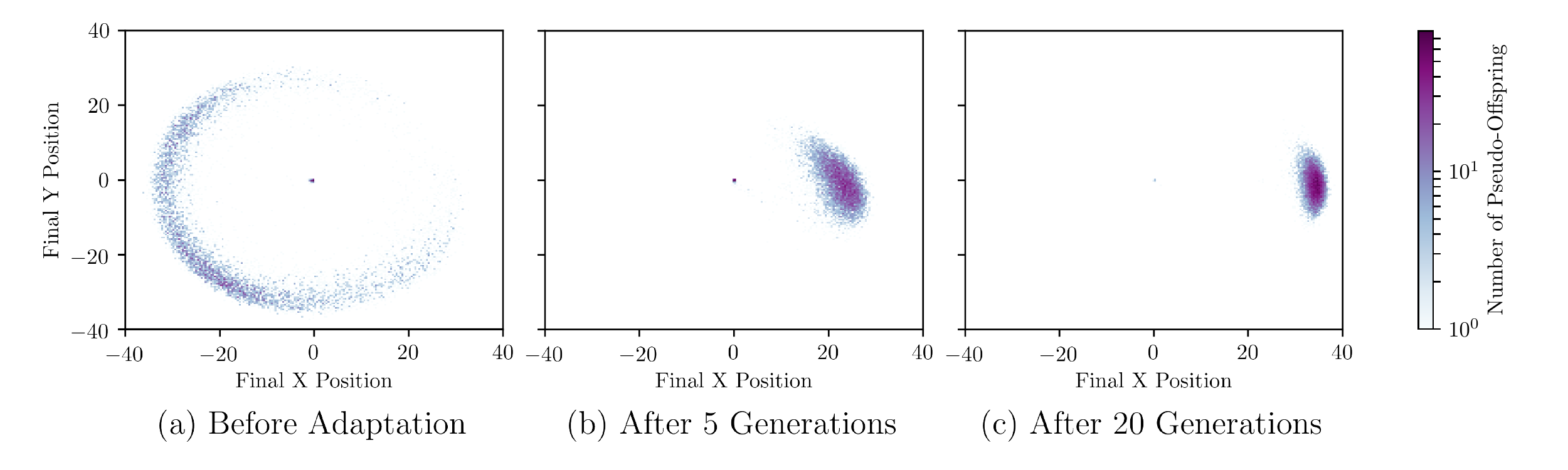}
\caption{\textbf{Distribution of behaviors during adaptation in the 3D locomotion domain.} Heat-maps of the final positions of 10,000 policies sampled from the population distribution initialized with MaxVar-EES, and adapted to move in the positive $x$ direction with Standard ES over several generations. These plots suggest that MaxVar-EES successfully found policies which could quickly adapt to perform new tasks. See figure \ref{fig:adapt_ent_hist} for the MaxEnt version.}
\label{fig:adapt_var_hist}
\end{figure*}

\begin{figure*}
\centering
\includegraphics[width=\linewidth]{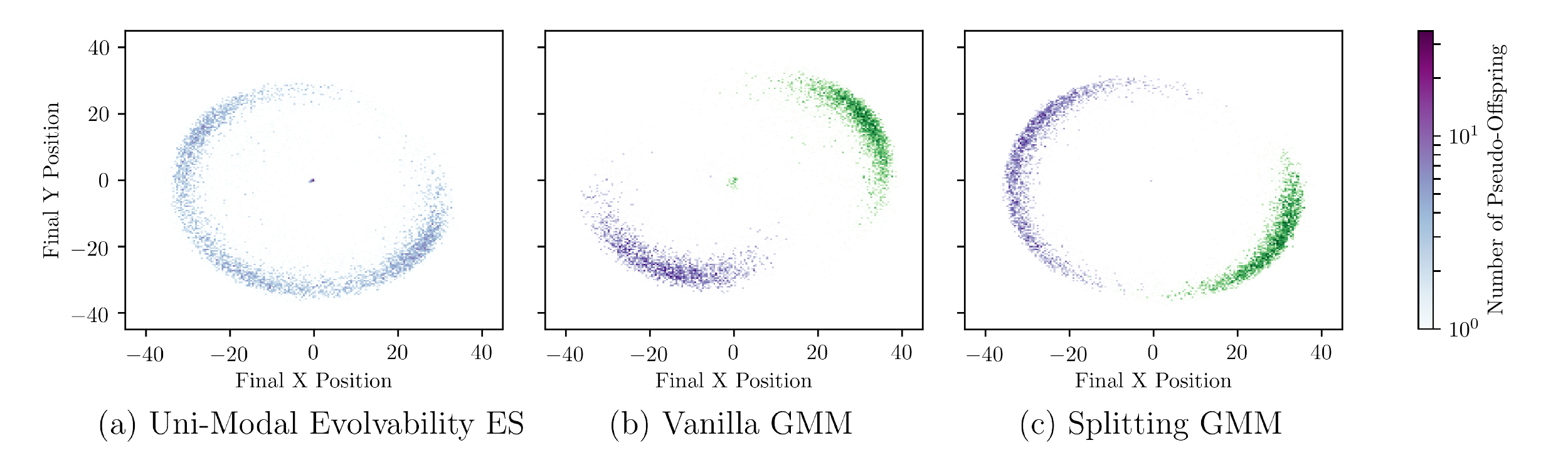}
\caption{\textbf{Distribution of behaviors for uni- and multi-modal MaxVar-EES variants}. Heat-maps of the final positions of 10,000 policies sampled from the population distribution at generation 100 of MaxVar-EES, with (a) one component and (b) a vanilla GMM with two components. Also shown is the result of splitting the single component in (a) into two components and evolving for 20 additional generations. These plots suggest that both bi-modal variants performed similarly. See figure \ref{fig:fork_ent_hist} for the MaxEnt version.}
\label{fig:fork_var_hist}
\end{figure*}

\begin{figure*}
\centering
\includegraphics[width=\linewidth]{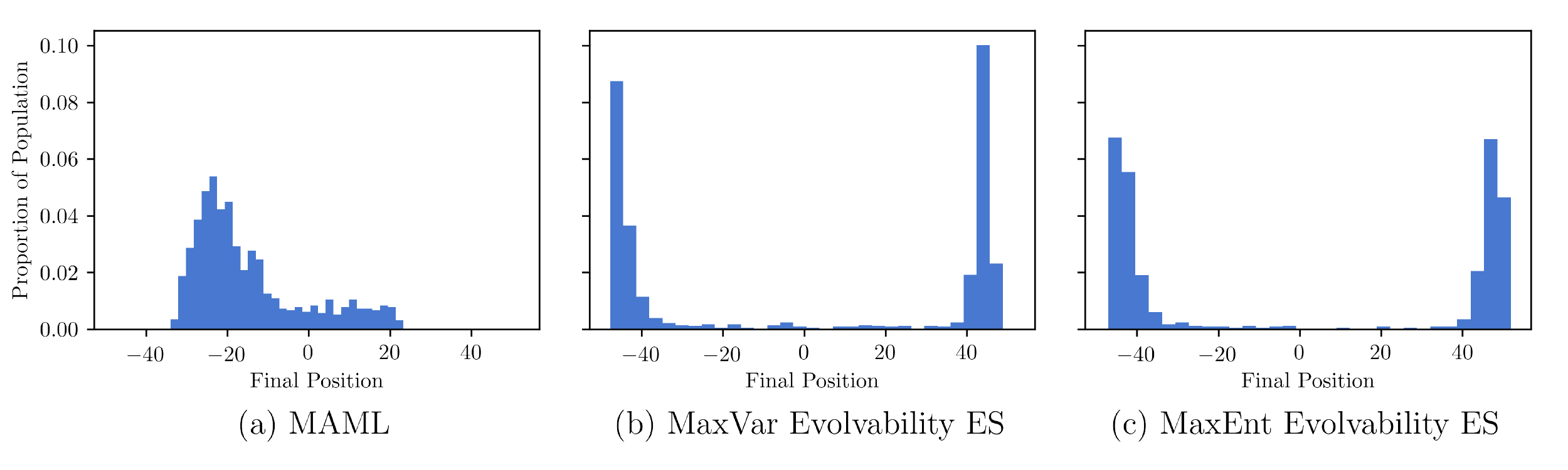}
\caption{\textbf{Distribution of behaviors compared to MAML in the 2-D locomotion domain. Histograms of the final $x$ positions of 1,000 policies sampled from the final population distribution are shown each variant of evolvability ES, as well as for perturbations of MAML policies 4 times smaller than those of evolvability ES. Smaller perturbations of MAML did not change the fundamental result (and interestingly eliminated the potential of generating far-right-walking policies).}}
\label{fig:maml_hist_small}
\end{figure*}

\subsection{Hyperparameters and Training Details}

For standard ES, fitness was rank-normalized to take values symmetrically between $-0.5$ and $0.5$ at each generation before computing gradient steps.
For both variants of evolvability ES, BCs were whitened to have mean zero and a standard deviation of one at each generation before computing losses and gradients.
This was done instead of rank normalization in order to preserve density information for variance and entropy estimation.

A Gaussian kernel with standard deviation $1.0$ was used for the MaxEnt-EES to estimate the density of behavior given samples from the population distribution.

\subsubsection{Interference Pattern Details}

The interference pattern was generated by the function
\begin{equation}
    f(x) = 5 \sin \frac{x}{5} \sin{20x}.
\end{equation}

Hyperparameters for the interference pattern task are shown in Tables \ref{tab:maxvar_interference} and \ref{tab:maxent_interference}.

\begin{table}[h]
 \centering
\begin{tabular}{l r r r}
\toprule
\small Hyperparameter & \small Setting \\
\midrule
\small Learning Rate & \small 0.03 \\
\small Population Standard Deviation & \small 0.5 \\
\small Population Size & \small 500 \\
\bottomrule
\end{tabular}
 \caption{\textbf{MaxVar Hyperparameters: Interference Pattern Task.}}
\label{tab:maxvar_interference}
\end{table}

\begin{table}[h]
 \centering
\begin{tabular}{l r r r}
\toprule
\small Hyperparameter & \small Setting \\
\midrule
\small Learning Rate & \small 0.1 \\
\small Population Standard Deviation & \small 0.5 \\
\small Population Size & \small 500 \\
\small Kernel Standard Deviation & \small 1.0 \\
\bottomrule
\end{tabular}
 \caption{\textbf{MaxEnt Hyperparameters: Interference Pattern Task.}}
\label{tab:maxent_interference}
\end{table}

\subsubsection{Locomotion Task Details}

For the locomotion tasks, all environments were run deterministically and no action noise was used during training.
The only source of randomness was from sampling from the population distribution. Policies were executed in the environment for $1,000$ timesteps. For comparison to evolvability ES, the fitness function for standard ES was set to be the final $x$ position of a policy, rewarding standard ES for walking as far as possible in positive $x$ direction.

NNs for both 2D and 3D locomotion 
were composed of two hidden layers with $256$ hidden units each, resulting in $166.7$K total parameters,
and were regularized with L2 penalties during training.
Inputs to the networks were normalized to have mean zero and a standard deviation of one based the mean and standard deviation of the states seen in a random subset of all training rollouts, with each rollout having probability $0.01$ of being sampled.

Experiments were performed on a cluster system and were distributed across a pool of $550$ CPU cores shared between two runs of the same algorithm.
Each run took approximately $5$ hours to complete.

Hyperparameters for both variants of Evolvability ES are shown Tables \ref{tab:maxvar_locomotion} and \ref{tab:maxent_locomotion}.

\begin{table}[h]
 \centering
\begin{tabular}{l r r r}
\toprule
\small Hyperparameter & \small Setting \\
\midrule
\small Learning Rate & \small 0.01 \\
\small Population Standard Deviation & \small 0.02 \\
\small Population Size & \small 10,000 \\
\small L2 Regularization Coefficient & \small 0.05 \\
\bottomrule
\end{tabular}
 \caption{\textbf{MaxVar Hyperparameters: Locomotion Tasks.}}
\label{tab:maxvar_locomotion}
\end{table}

\begin{table}[h]
 \centering
\begin{tabular}{l r r r}
\toprule
\small Hyperparameter & \small Setting \\
\midrule
\small Learning Rate & \small 0.01 \\
\small Population Standard Deviation & \small 0.02 \\
\small Population Size & \small 10,000 \\
\small L2 Regularization Coefficient & \small 0.05 \\
\small Kernel Bandwidth & \small 1.0 \\
\bottomrule
\end{tabular}
 \caption{\textbf{MaxEnt Hyperparameters: Locomotion Tasks.}}
\label{tab:maxent_locomotion}
\end{table}

\subsubsection{MAML Details}
The MAML algorithm was run on the 2D locomotion task using the same fully-connected neural network with 2 hidden layers of 256 hidden units each as that used in evolvability ES.
There are two main differences between our usage of MAML and those typically done in the past (e.g.\ in \citet{finn2017model}).
First, we used the PyBullet simulator \cite{coumans2016pybullet} as opposed to the more canonical MuJoCo simulator \cite{mujoco}.
Second, typically locomotion tasks have associated with them an \emph{energy penalty}, supplementing the standard distance-based reward function.
Because it is unclear how to incorporate an energy penalty into a (potentially vector-valued) behavior characteristic, we used no energy penalty in our evolvability ES experiments. Consequently, for a fairer comparison we also used no energy penalty in our MAML experiments.
Table \ref{tab:maml_details} displays additional hyperparameters for MAML.

\begin{table}[h]
 \centering
\begin{tabular}{l r r r}
\toprule
\small Hyperparameter & \small Setting \\
\midrule
\small Adaptation Learning Rate & \small 0.1 \\
\small Conjugate Gradient Damping & \small 1e-5 \\
\small Conjugate Gradient Iterations & \small 10 \\
\small Max Number of Line Search Steps & \small 15 \\
\small Line Search Backtrack Ratio & \small 0.8 \\
\small Discount Factor & \small 0.99 \\
\small Adaptation Batch Size & \small 10 \\
\small Outer Batch Size & \small 40 \\
\bottomrule
\end{tabular}
 \caption{\textbf{MAML Hyperparameters.}}
\label{tab:maml_details}
\end{table}

\section{Stochastic Computation Graphs} \label{sec:computation_graphs}

A stochastic computation graph, as defined in \citet{schulman_2015}, is a directed acyclic graph consisting of fixed input nodes, deterministic nodes representing functions of their inputs, and stochastic nodes representing random variables distributed conditionally on their inputs.

A stochastic computation graph $\mathcal{G}$ represents the expectation (over its stochastic nodes $\{z_i\}$) of the sum of its output nodes $\{f_i\}$, as a function of its input nodes $\{x_i\}$:
\begin{equation}
    \mathcal{G}(x_1, \ldots, x_l) = \mathbb{E}_{z_1, \ldots, z_m}\left [ \sum_{i=1}^n f_i \right ]
\end{equation}

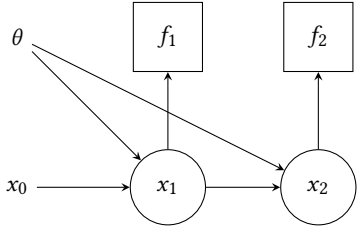
\begin{figure}
    \centering
    \begin{tikzpicture}[square/.style={regular polygon,regular polygon sides=4}]
        \node (x0) at (0, 0) {$x_0$};
        \node (x1) at (2, 0) [minimum size=1cm,draw,circle] {$x_1$};
        \node (x2) at (4, 0) [minimum size=1cm,draw,circle] {$x_2$};
        \node (theta) at (0, 2) {$\theta$};
        \node (f1) at (2,2) [minimum size=1.3cm,draw,square] {$f_1$};
        \node (f2) at (4,2) [minimum size=1.3cm,draw,square] {$f_2$};
        \draw [->,>=stealth] (x0) edge (x1) (x1) edge (x2) (x1) edge (f1) (x2) edge (f2) (theta) edge (x1) (theta) edge (x2);
    \end{tikzpicture} 
    \caption{Example stochastic computation graph: input nodes are depicted with no border, deterministic nodes with square borders, and stochastic nodes with circular borders.}
    \label{fig:mrp_original}
\end{figure}

For example, consider the stochastic computation graph in figure \ref{fig:mrp_original}, reproduced from \cite{schulman_2015}. This graph $\mathcal{G}$ represents the expectation
\begin{equation}
    \mathcal{G}(x_0, \theta) = \mathbb{E}_{x_1, x_2}\left [ f_1(x_1) + f_2(x_2) \right ],
\end{equation}
where $x_1 \sim p(\cdot;x_0, \theta)$ and $x_2 \sim p(\cdot;x_1, \theta)$.

A key property of stochastic computation graphs is that they may be differentiated with respect to their inputs.
Using the score function estimator \cite{fu_2006}, we have that
\begin{equation}
    \begin{split}
    \nabla_\theta \mathcal{G}(x_0, \theta) =  \mathbb{E}_{x_1, x_2}\biggl [\nabla_\theta \log p(x_1;\theta, x_0)(f_1(x_1) + f_2(x_2)) \\
    + \nabla_\theta \log p(x_2;\theta, x_1) f_2(x_2) \biggr ].
    \end{split}
\end{equation}
\citet{schulman_2015} also derive \emph{surrogate} loss functions for stochastic computation graphs, allowing for implementations of stochastic computation graphs with existing automatic differentiation software.

For example, given a sample $\{x_1^i\}_{1 \le i \le N}$ of $x_1$ and $\{x_2^i\}_{1 \le i \le N}$ of $x_2$, we can write
\begin{equation}
    \hat{L}(\theta) = \frac{1}{N} \sum_i \log p(x_1^i;\theta, x_0)(f_1(x_1^i) + f_2(x_2^i)) + \log p(x_2^i;\theta, x_1^i) f_2(x_2^i).
\end{equation}
Now to estimate $\nabla_\theta \mathcal{G}(x_0, \theta)$, we have
\begin{equation}
    \nabla_\theta \mathcal{G}(x_0, \theta) \approx \nabla_\theta \hat{L}(\theta),
\end{equation}
which may be computed with popular automatic differentiation software.

\section{Nested Stochastic Computation Graphs} \label{sec:nested_computation_graphs}

We make two changes to the stochastic computation graph formalism:

\begin{enumerate}
    \item We add a third type of node which represents the expectation over one of its parent stochastic nodes of one of its inputs.
    We require that a stochastic node representing a random variable $z$ be a dependency of exactly one expectation node over $z$, and that every expectation node over a random variable $z$ depend on a stochastic node representing $z$.
    \item Consider a stochastic node representing a random variable $z$ conditionally dependent on a node $y$.
    Rather than expressing this as a dependency of $z$ on $y$, we represent this as a dependency between the expectation node over $z$ on $y$.
    Formally, this means all stochastic nodes are required to be leaves of the computation graph.
\end{enumerate}
Because ``nested stochastic computation graphs,'' as we term them, contain their expectations explicitly, they simply represent the sum of their output nodes (instead of the expected sum of their output nodes, as with regular stochastic computation graphs).

\begin{figure}
    \centering
    \begin{tikzpicture}[square/.style={regular polygon,regular polygon sides=4}]
        \node (x0) at (0, 0) {$x_0$};
        \node (x1) at (4, 0) [minimum size=1cm,draw,circle] {$x_1$};
        \node (x2) at (6, 0) [minimum size=1cm,draw,circle] {$x_2$};
        \node (theta) at (0, 2) {$\theta$};
        \node (f1) at (4,2) [minimum size=1.3cm,draw,square] {$f_1$};
        \node (f2) at (6,2) [minimum size=1.3cm,draw,square] {$f_2$};
        \node (E1) at (2,4) [minimum size=1cm,draw,double,ellipse] {$\mathbb{E}_{x_1}[+]$};
        \node (S1) at (4,4) [minimum size=1.3cm,draw,square] {$+$};
        \node (E2) at (6,4) [minimum size=1cm,draw,double,ellipse] {$\mathbb{E}_{x_2}[f_2]$};
        \draw [->,>=stealth] (x0) edge (E1) (x1) edge (E2) (x1) edge (f1) (x2) edge (f2) (theta) edge (E1) (theta) edge (E2) (f1) edge (S1) (f2) edge (E2) (E2) edge (S1) (S1) edge (E1);
    \end{tikzpicture} 
    \caption{Example nested stochastic computation graph: input nodes are depicted with no border, deterministic nodes with square borders,  stochastic nodes with circular borders, and expectation nodes with double elliptical borders.}
    \label{fig:mrp_nested}
\end{figure}
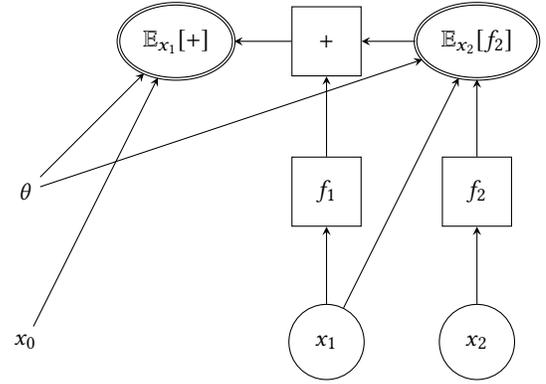

As an example, consider the nested stochastic computation graph $\mathcal{G}$ depicted in figure \ref{fig:mrp_nested}.
First, note that $\mathcal{G}$ is indeed a nested stochastic computation graph, because the stochastic nodes and expectation nodes correspond, and because all stochastic nodes are leaves of the graph.
Next, note that $\mathcal{G}$ is equivalent to the stochastic computation graph of figure \ref{fig:mrp_original} in the sense that it computes the same function of its inputs:
\begin{align} \label{eq:mrp_nested}
    \mathcal{G}(x_0, \theta) &= \mathbb{E}_{x_1}\left [ f_1(x_1) + \mathbb{E}_{x_2} \left [ f_2(x_2) \right ] \right ]\\
    &= \mathbb{E}_{x_1, x_2}\left [ f_1(x_1) + f_2(x_2) \right ]
\end{align}
The original stochastic computation graph formalism has the advantage of more clearly depicting conditional relationships, but this new formalism has two advantages:

\begin{enumerate}
    \item Nested stochastic computation graphs can represent arbitrarily nested expectations. We have already seen this in part with the example of figure \ref{fig:mrp_nested}, but we shall see this more clearly in a few sections.
    \item It is trivial to define surrogate loss functions for nested stochastic computation graphs. Moreover, these surrogate loss functions have the property that in the forward pass, they estimate the true loss function.
\end{enumerate}

Consider a nested stochastic computation graph $\mathcal{G}$ with input nodes $\{\theta\} \cup \{x_i\}$, and suppose we wish to compute the gradient $\nabla_\theta \mathcal{G}(\theta, x_1, \ldots, x_n)$. 
We would like to be able to compute the gradient of any node with respect to any of its inputs, as this would allow us to use the well-known backpropagation algorithm to compute $\nabla_\theta \mathcal{G}$.
Unfortunately, it is often impossible to write the gradient of an expectation in closed form; we shall instead estimate $\nabla_\theta \mathcal{G}$ given a sample from the stochastic nodes of $\mathcal{G}$.

\begin{figure}
    \centering
    \begin{tikzpicture}[square/.style={regular polygon,regular polygon sides=4}]
        \node (z) at (3, 0) [minimum size=1cm,draw,circle] {$z$};
        \node (y1) at (1.5, 1.5) [minimum size=1.3cm,draw,square] {$y_1$};
        \node (ydots) at (3, 1.5) {$\ldots$};
        \node (yl) at (4.5, 1.5) [minimum size=1.3cm,draw,square] {$y_l$};
        \node (xi1) at (0, 5) [minimum size=1.3cm,draw,square] {$\xi_1$};
        \node (xidots) at (0, 4.1) {$\vdots$};
        \node (xim) at (0, 3) [minimum size=1.3cm,draw,square] {$\xi_m$};
        \node (f) at (3, 3) [minimum size=1.3cm,draw,square] {$f$};
        \node (E) at (3, 4.5) [minimum size=1cm,draw,double,ellipse] {$\mathbb{E}_z [f]$};
        \draw [->,>=stealth] (y1) edge (f) (yl) edge (f) (xi1) edge (E) (xim) edge (E) (z) edge (ydots) (ydots) edge (f) (xidots) edge (E) (f) edge (E);
    \end{tikzpicture} 
    \caption{Nested stochastic computation graph with a single expectation node.}
    \label{fig:surrogate_loss}
\end{figure}
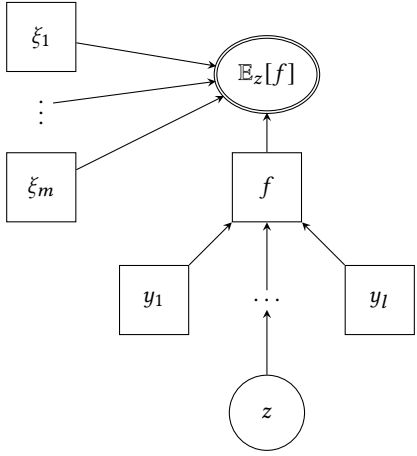

Suppose $\mathcal{G}$ has a output node $\mathbb{E}_z [f]$, the only expectation node in $\mathcal{G}$.
Suppose moreover that $\mathbb{E}_z [f]$ has inputs $\{\xi_i\}$ (apart from $f$) so $z \sim p(\cdot, \xi_1 \ldots \xi_m)$, and suppose $f$ has inputs $\{y_i\}$. Note that to satisfy the definition of a nested stochastic computation graph $f$ must ultimately depend on $z$, so we write $f$ as $f(y_1, \ldots, y_l; z)$. See figure \ref{fig:surrogate_loss} for a visual representation of $\mathcal{G}$.

If we wish to compute $\nabla_\omega \mathbb{E}_z \left [f(y_1, \ldots, y_l; z) \right ]$, using the likelihood ratio interpretation of the score function \cite{fu_2006} and given a sample $\{z_i\}_{1 \le i \le N}$ of $z$, we can write

\begin{equation} \label{eq:surrogate_loss}
    \hat{L}(\omega) = \frac{1}{N} \sum_i f(y_1, \ldots, y_l; z) \mathcal{L}(z_i),
\end{equation}
where $\mathcal{L}$ is the likelihood ratio given by

\begin{equation}
    \mathcal{L}(z_i) = \frac{p(z_i; \xi_1, \ldots, \xi_m)}{p(z_i; \xi'_1, \ldots, \xi'_m)},
\end{equation}
and setting $\xi'_i = \xi_i$ gives

\begin{align}
    \nabla_\omega \mathcal{L}(z_i) &= \frac{\nabla_\omega p(z_i; \xi_1, \ldots, \xi_m)}{p(z_i; \xi_1, \ldots, \xi_m)} \\
    &= \nabla_\omega \log p(z_i; \xi_1, \ldots, \xi_m).
\end{align}

Note that $f$ can either depend on $\omega$ directly, if $\omega \in \{y_i\}$, or through the distribution of $z$, if $\omega \in \{\xi_i\}$.
Differentiating, we have
\begin{equation}
    \nabla_\omega \hat{L}(\omega) = \frac{1}{N} \sum_i f(y_1, \ldots, y_l; z) \nabla_\omega \mathcal{L}(z_i)
    + \nabla_\omega f(y_1, \ldots, y_l; z) \mathcal{L}(z_i),
\end{equation}
and setting $\xi'_i = \xi_i$ we see that $\nabla_\omega \hat{L}(\omega)$ is an estimate of
\begin{equation}
    \nabla_\omega \mathbb{E}_z \left [f(y_1, \ldots, y_l; z) \right ]
\end{equation}

Generalizing this trick to an arbitrary nested stochastic computation graph $\mathcal{G}$, we see that creating a surrogate loss function $\hat{L}$ is as simple as replacing each expectation node with a sample mean as in Equation \ref{eq:surrogate_loss}, weighted by the likelihood ratio $\mathcal{L}(z_i)$.
Note that since $\mathcal{L}(z_i) = 1$, the surrogate loss is simply the method of moments estimate of the true loss. 

Considering again the graph of figure \ref{fig:mrp_nested}, we can construct a surrogate loss function
\begin{equation} \label{eq:mrp_nested_surrogate}
    \hat{L}(\theta, x_0) = \frac{1}{N} \sum_i \biggl (f_1(x_1^i) + \sum_j f_2(x_2^i) \mathcal{L}(x_2^i; \theta) \biggr ) \mathcal{L}(x_1^i; \theta, x_0).
\end{equation}
While this may not seem like very much of an improvement at first, it is insightful to note how similar the forms of Equations \ref{eq:mrp_nested} and \ref{eq:mrp_nested_surrogate} are. In particular, this similarity makes it straightforward to write a custom ``expectation'' operation for use in automatic differentiation software which computes the sample mean weighted by the likelihood ratio.

\section{Stochastic Computation Graphs for Standard ES and Evolvability ES} \label{sec:evo_graphs}

As mentioned in the main text of the paper, we can estimate the gradients of the standard ES and evolvability ES loss functions with the score function estimator because we can represent these loss functions as (nested) stochastic computation graphs. Figure \ref{fig:nes} shows a (nested) stochastic computation graph representing the standard ES loss function. This yields the following surrogate loss function for ES:
\begin{equation} \label{eq:es_surr}
    \hat{L}(\theta) = \frac{1}{N} \sum_i f(z_i) \mathcal{L}(z_i),
\end{equation}
where $\mathcal{L}(z_i)$ is the likelihood function.

Figure \ref{fig:max_variance_graph} shows a nested stochastic computation graph representing MaxVar-EES, yielding the following surrogate loss function:
\begin{equation} \label{eq:var_surr}
    \hat{L}(\theta) = \frac{1}{N} \sum_i \bm{B}(z_i)^2 \mathcal{L}(z_i; \theta)
\end{equation}
Finally, figure \ref{fig:max_entropy_graph} shows a nested stochastic computation graph representing the loss function for MaxEnt-EES, yielding the following surrogate loss function:
\begin{equation} \label{eq:ent_surr}
    \hat{L}(\theta) = -\frac{1}{N} \sum_i \log \left (\sum_j \varphi(\bm{B}(z')-z) \mathcal{L}(z_j; \theta) \right ) \mathcal{L}(z_i; \theta)
\end{equation}

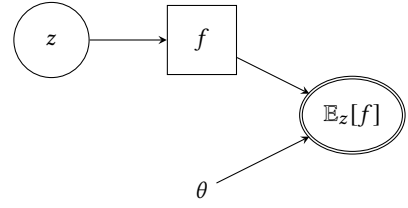
\begin{figure}
    \centering
    \begin{tikzpicture}[square/.style={regular polygon,regular polygon sides=4}]
        \node (z) at (0, 2) [minimum size=1cm,draw,circle] {$z$};
        \node (f) at (2, 2) [minimum size=1.3cm,draw,square] {$f$};
        \node (theta) at (2, 0) {$\theta$};
        \node (E) at (4, 1) [minimum size=1cm,draw,double,ellipse] {$\mathbb{E}_z [f]$};
        \draw [->,>=stealth] (z) edge (f) (f) edge (E) (theta) edge (E);
    \end{tikzpicture} 
    \caption{Nested stochastic computation graph representing Natural Evolution Strategies.}
    \label{fig:nes}
\end{figure}

\begin{figure}
    \centering
    \begin{tikzpicture}[square/.style={regular polygon,regular polygon sides=4}]
        \node (z) at (0, 2) [minimum size=1cm,draw,circle] {$z$};
        \node (B) at (2, 2) [minimum size=1.3cm,draw,square] {$\bm{B}$};
        \node (Sq) at (4, 2) [minimum size=1.3cm,draw,square] {$(\cdot)^2$};
        \node (theta) at (4, 0) {$\theta$};
        \node (E) at (6, 1) [minimum size=1cm,draw,double,ellipse] {$\mathbb{E}_z [(\cdot)^2]$};
        \draw [->,>=stealth] (z) edge (B) (B) edge (Sq) (Sq) edge (E) (theta) edge (E);
    \end{tikzpicture} 
    \caption{Nested stochastic computation graph representing the loss function of MaxVar-EES.}
    \label{fig:max_variance_graph}
\end{figure}

\begin{figure}
    \centering
    \begin{tikzpicture}[square/.style={regular polygon,regular polygon sides=4}]
        \node (z1) at (0, 3) [minimum size=1cm,draw,circle] {$z$};
        \node (B1) at (1.5, 3) [minimum size=1.3cm,draw,square] {$\bm{B}$};
        \node (z2) at (0, 0) [minimum size=1cm,draw,circle] {$z'$};
        \node (B2) at (1.5, 0) [minimum size=1.3cm,draw,square] {$\bm{B}$};
        \node (phi) at (1.5, 1.5) [minimum size=1.3cm,draw,square] {$\varphi$};
        \node (E2) at (3.2, 1.5) [minimum size=1cm,draw,double,ellipse] {$\mathbb{E}_z' [\varphi]$};
        \node (log) at (4.9, 1.5) {$-\log$};
        \node (fakelog) at (4.9, 1.5) [minimum size=1.3cm,draw,square] {};
        \node (E1) at (7, 1.5) [minimum size=1cm,draw,double,ellipse] {$\mathbb{E}_z [-\log]$};
        \node (theta) at (4.9, 0) {$\theta$};
        \draw [->,>=stealth] (z1) edge (B1) (z2) edge (B2) (B1) edge (phi) (B2) edge (phi) (phi) edge (E2) (E2) edge (fakelog) (fakelog) edge (E1) (theta) edge (E1) (theta) edge (E2);
    \end{tikzpicture} 
    \caption{Nested stochastic computation graph representing the loss function of MaxEnt-EES.}
    \label{fig:max_entropy_graph}
\end{figure}

\section{Policy Gradients Evolvability ES}\label{sec:pg_ees}
With MaxVar-EES, the loss function we wish to optimize is given by:
\begin{equation}
    J(\mu) = \mathbb{E}_{\theta \sim N(\mu, \sigma^2)}\left(\sum_{t=1}^\tau \mathbb{E}_{a_t \sim \pi(\cdot | s_t, \theta)} (B_\tau - \mathbb{E}_{\theta}(B_\tau))^2 \mathds{1}[t = \tau]\right)
\end{equation}
where $\tau$ is the length of an episode; the action $a_t$ at time $t$ is sampled from a distribution $\pi(\cdot|s_t, \theta)$ depending on the current state $s_t$ and the parameters of the current policy $\theta$; and $B_\tau$ is the behavior at time $\tau$.
Evolvability ES uses the following representation of the gradient of this function:
\begin{equation}
    \begin{split}
    \nabla_\mu J(\mu) = \mathbb{E}_{\theta \sim N(\mu, \sigma^2)}\biggl(\sum_{t=1}^\tau \mathbb{E}_{a_t \sim \pi(\cdot | s_t, \theta)} (B_\tau - \mathbb{E}_{\theta}(B_\tau))^2 \\
    \nabla_\mu \log N(\theta ; \mu, \sigma^2)\mathds{1}[t = \tau] \biggr)
    \end{split}
\end{equation}
Another way to write this same gradient is as follows, using the reparametrization trick:
\begin{equation}
    \begin{split}
    \nabla_\mu J(\mu) = \mathbb{E}_{\epsilon \sim N(0, \sigma^2)}\biggl(\sum_{t=1}^\tau \mathbb{E}_{a_t \sim \pi(\cdot | s_t, \mu + \epsilon)} (B_\tau - \mathbb{E}_{\theta}(B_\tau))^2 x\\
    \nabla_\mu \log \pi(a_t | s_t, \mu + \epsilon)\mathds{1}[t = \tau] \biggr)
    \end{split}
\end{equation}
This representation is reminiscent of the NoisyNet formulation of the policy gradient algorithm \cite{fortunato2017_noisynets}.
Evolvability ES estimates its representation of the gradient from samples like this:
\begin{equation}
\nabla_\mu J(\mu) \approx \sum_{k=1}^m \left(\sum_{t=1}^\tau (B_\tau^k - \bar{B}_\tau)^2 \mathds{1}[t = \tau] \nabla_\mu \log N(\theta_k ; \mu, \sigma^2) \right)
\end{equation}
where $B^k_\tau$ is the BC of the $k$th individual, and $\bar{B}$ is the mean BC among all $m$ individuals.
A hypothetical policy gradients version of Evolvability ES could estimate its gradient as follows:
\begin{equation}
\nabla_\mu J(\mu) \approx \sum_{k=1}^m \left(\sum_{t=1}^\tau (B_\tau^k - \bar{B}_\tau)^2 \mathds{1}[t = \tau] \nabla_\mu \log \pi(a_t | s_t, \mu + \epsilon) \right)
\end{equation}
The formulation above in terms of an explicit sum over $\tau$ timesteps and an indicator on the current timestep makes it easy to extend both of these algorithms to allow for diversity of behavior at multiple timesteps, closer to a traditional RL per-timestep reward function.

\section{Theoretical Results} \label{sec:theory}
\begin{theorem}
    For every pair of continuous functions $g : [0,1]^n \to \mathbb{R}$ and $h : [0,1]^n \to \mathbb{R}$, and for every $\epsilon>0$, there exists a $\delta_2>0$ such that for every $0 < \delta_1 < \delta_2$, there exists $5$-hidden-layer neural network $f(x ; W^1, \ldots, W^5) : [0,1]^n \to \mathbb{R}$ with ReLU activations such that for any distribution of weight perturbations such that each component is sampled i.i.d.\ from a distribution with distribution function $F$, $|f(x; \widetilde{W}^1, \ldots, \widetilde{W}^5) - g(x)| < \epsilon$ for all $x \in [0,1]^n$ with probability at least $F(\delta_2) - F(\delta_1)$, and $|f(x; \widetilde{W}^1, \ldots, \widetilde{W}^5) - h(x)| < \epsilon$ for all $x \in [0,1]^n$ with probability at least $F(-\delta_1) - F(-\delta_2)$, where $\widetilde{W}^l$ are the perturbed weights.
\end{theorem}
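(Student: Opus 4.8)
The plan is to prove the statement by an explicit construction built from three ingredients. The first is a pair of \emph{payload} subnetworks: by the classical universal approximation theorem for ReLU networks, fix shallow ReLU networks $N_g, N_h : [0,1]^n \to \mathbb{R}$ with $\|N_g - g\|_\infty < \epsilon/2$ and $\|N_h - h\|_\infty < \epsilon/2$ on $[0,1]^n$, and let $R$ be a common bound for $|N_g|$ and $|N_h|$ on the cube.

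The second ingredient is a \emph{switch}. Designate one weight $w^\star$ with nominal value $0$, wired so that its perturbed value $\eta^\star$ (which has distribution function $F$) is passed, independently of $x$, through the two-ReLU clamp
\[
\beta(z) = \frac{1}{\delta_1}\bigl(\mathrm{ReLU}(z) - \mathrm{ReLU}(z - \delta_1)\bigr),
\]
which satisfies $\beta(z) = 1$ for every $z \ge \delta_1$ and $\beta(z) = 0$ for every $z \le 0$; the transition region $(0, \delta_1)$ of this particular clamp is disjoint from both $(\delta_1, \delta_2)$ and $(-\delta_2, -\delta_1)$, so nominally $\beta(\eta^\star) = 1$ exactly on $\{\eta^\star \in (\delta_1, \delta_2)\}$ and $\beta(\eta^\star) = 0$ on $\{\eta^\star \in (-\delta_2, -\delta_1)\}$. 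The third ingredient is a ReLU \emph{multiplexer}: taking $M \gg R$, set
\[
u_g = \mathrm{ReLU}\!\bigl(N_g(x) + R + M(\beta(\eta^\star) - 1)\bigr), \qquad u_h = \mathrm{ReLU}\!\bigl(N_h(x) + R - M\beta(\eta^\star)\bigr),
\]
and output $f(x) = u_g + u_h - R$. A one-line case check on $\beta \in \{0,1\}$ gives $f = N_g \approx g$ when the switch falls in its positive range and $f = N_h \approx h$ when it falls in its negative range.

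Next I would lay these pieces out as one feedforward network — the shallow payloads and the switch-reading units in the first layers (in parallel), the clamp and the multiplexer in the last layers, with the payload readouts carried forward through the intervening ReLU layers by the identity $v = \mathrm{ReLU}(v + C) - C$ for $C$ large — and verify that everything fits within five hidden layers, with $\delta_2$ fixed at the outset only to pin down the two target probabilities. The probability bookkeeping is then immediate: $\{\eta^\star \in (\delta_1, \delta_2)\}$ has probability exactly $F(\delta_2) - F(\delta_1)$ and forces $\beta = 1$, hence $f$ equals the (perturbed) copy of $N_g$, while $\{\eta^\star \in (-\delta_2, -\delta_1)\}$ has probability $F(-\delta_1) - F(-\delta_2)$ and forces $\beta = 0$, hence $f$ equals the (perturbed) copy of $N_h$.

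The step I expect to be the crux is the one elided above: \emph{every} weight is perturbed at once, so I must show that on these two events the perturbations of the remaining weights move $f$ by at most $\epsilon/2$ uniformly over $x \in [0,1]^n$ — in particular that they do not disturb the clamp enough to unpin $\beta$ from $0$ or $1$, and do not spoil the payload approximations. The approach I would take is to choose the payload weights, the multiplexer gain $M$, and the internal scales of the switch and clamp units large relative to $\delta_2$, and to route each regime so that the inactive branch is zeroed by a deeply saturated ReLU, so that the map from the non-$w^\star$ parameters to the network output is Lipschitz with a controllably small constant on the relevant parameter region; the two probability bounds are then inherited verbatim from the single coordinate $\eta^\star$. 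The universal-approximation input and the gating algebra are routine; essentially all of the difficulty is in this uniform, all-weights-perturbed robustness estimate together with the five-layer accounting.
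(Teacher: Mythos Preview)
Your construction is essentially the paper's: universal-approximation payloads for $g$ and $h$, one designated weight whose perturbation acts as a switch, a ReLU clamp that reads the sign of that perturbation, and a ReLU gate that passes the selected payload and kills the other. The paper's two-output $\sigma \approx (\mathds{1}[x>0],\mathds{1}[x<0])$ plays the role of your $\beta$, and its nodes $\hat G,\hat H$ play the role of your $u_g,u_h$; the layer count and the pass-through trick $v=\mathrm{ReLU}(v+C)-C$ match as well.

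Where your write-up diverges, and where it has a genuine gap, is in the role you assign to $\delta_2$. You say $\delta_2$ is ``fixed at the outset only to pin down the two target probabilities'' and then propose to achieve robustness by scaling the network large relative to $\delta_2$ so that the map from the non-$w^\star$ weights to the output has ``controllably small'' Lipschitz constant. That does not work: positive homogeneity of ReLU lets you rescale layers, but an additive perturbation of size $\delta$ to a scaled-down outer weight $a_i/c$ has exactly the effect of a perturbation $c\delta$ to the original $a_i$, so rescaling merely shuffles sensitivity between layers rather than reducing it; you cannot drive the Lipschitz constant in the payload weights to zero while still computing $N_g$ and $N_h$. Consequently the claim that the probability bounds are ``inherited verbatim from the single coordinate $\eta^\star$'' overshoots --- on $\{\eta^\star\in(\delta_1,\delta_2)\}$ the other coordinates are unconstrained under a general $F$. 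The paper runs the quantifiers the other way: it first fixes the sub-networks $G,H,\sigma$ (depending only on $g,h,\epsilon$), and \emph{then} chooses $\delta_2$ small enough that every perturbation of magnitude at most $\delta_2$ moves those sub-networks by at most a prescribed amount, using uniform continuity of a fixed finite network in its parameters. That is the existential content of $\delta_2$ in the statement, and it is the piece your plan should swap in for the scaling argument.
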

\begin{proof}[Proof sketch]
    First, note that $g$ and $h$ can each be arbitrarily approximated as 2-layer neural networks $G$ and $H$.
    
    Next, construct a 2-layer neural network $\sigma : \mathbb{R} \to \mathbb{R}^2$ with ReLU activations which is uniformly $\delta$-close in each component to the function $x \mapsto (I[x>0], I[x<0])$.
    
    Each of these $3$ neural networks is uniformly continuous both in the inputs and in the weights). We now connect them into a neural network $F$. Create a node $K$ in the first hidden layer of $F$ with bias $B$, and connect $G$ and $H$ to the inputs such that their outputs are in the second hidden layer of $F$. Also construct a node $K'$ in the second hidden layer of $F$, and label the weight connecting $K$ and $K'$ $w_k$. Now place $\sigma$ such that its outputs are in the fourth hidden layer of $F$.
    Next, create two nodes $\hat{G}$ and $\hat{H}$ in the $5$th hidden layer of $F$, initialize the weight connecting $G$ and $\hat{G}$ to $1$, and that connecting $H$ and $\hat{H}$ to $1$. Label the weight connecting $\sigma_1$ to $\hat{G}$ $w_g$ and that connecting $\sigma_2$ to $\hat{H}$ $w_h$. Finally, connect $\hat{G}$ and $\hat{H}$ to the output node $F$ with weight $1$ each. Initialize all unmentioned weights to $0$.
    
    Now pick $\delta_2$ such that uniformly over perturbations less than $\delta/2$, nodes $G$ and $H$ are $\epsilon_1$-close to their original outputs, and such that nodes $\sigma_1$ and $\sigma_2$ are $\epsilon_1/M$-close to its original output, where $M$ is a bound on the activations of nodes $g$ and $h$. 
    
    Fix $0 < \delta_1 < \delta_2$. Set $B$ such that every perturbation in $(\delta_1, \delta_2)$ to $w_k$ results in node $K$ positive over all inputs, and every perturbation in $(-\delta_2, -\delta_1)$ to $w_k$ results in node $K$ negative over all inputs. We can do this because the inputs to $K$ are bounded.
    
    Then for perturbations to $w_k$ in $(\delta_1, \delta_2)$, we can make $\sigma$ $\epsilon_2$-close to $(1, 0)$ (in each component). We can then set $w_g$ and $w_h$ small enough that when $\sigma_1$ is $\epsilon_1/M$-close to $1$, node $\hat{G}$ is $0$ and node $\hat{H}$ is $\epsilon_3$-close to its original output, and when $\sigma_2$ is $\epsilon_1/M$-close to $1$, $\hat{H}$ is $0$ and $\hat{G}$ is $\epsilon_3$-close to its original output. Putting everything together, we can choose $\epsilon_1$, $\epsilon_2$, and $\epsilon_3$ such that for perturbations to $w_k$ in $(\delta_1, \delta_2)$, $F$ is $\epsilon$-close to $g$, and for perturbations to $w_k$ in $(-\delta_2, -\delta_1)$, $F$ is $\epsilon$-close to $h$.
\end{proof}
We hope that future work will strengthen these results in several ways. First, the number of layers can likely be made more economical, and the results should be extensible to other activation functions like Tanh. It should also be possible to merge more than two continuous functions in a similar way; a more open-ended question is whether, as in the Ant domain, some uncountable family of functions can be merged into a single neural network.

\end{document}